\documentclass{article}

\usepackage[preprint]{neurips_2026}

\usepackage[utf8]{inputenc} 
\usepackage[T1]{fontenc}    
\usepackage{hyperref}       
\usepackage{url}            
\usepackage{booktabs}       
\usepackage{float}          
\usepackage{amsfonts}       
\usepackage{amssymb}        
\usepackage{nicefrac}       
\usepackage{microtype}      
\usepackage{xcolor}         
\usepackage{algorithm}      
\usepackage{algorithmic}    
\usepackage{amsmath}        
\usepackage{amsthm}         
\usepackage{graphicx}       
\usepackage{subcaption}     
\definecolor{citeblue}{RGB}{45, 105, 160}
\setcitestyle{numbers,square,comma}
\hypersetup{
  colorlinks=true,
  citecolor=citeblue,
  linkcolor=citeblue,
  urlcolor=citeblue
}
\usepackage{makecell}
\newtheorem{proposition}{Proposition}
\newtheorem{lemma}{Lemma}

\title{$h$-control: Training-Free Camera Control via Block-Conditional Gibbs Refinement}

\author{%
  Yuzhu Wang\thanks{Equal contribution.}\textsuperscript{\hspace{0.6em}1,2} \quad
  Xi Ye\footnotemark[1]\textsuperscript{\hspace{0.6em}1} \quad
  Duo Su\textsuperscript{1} \quad
  Yangyang Xu\textsuperscript{1} \quad
  Jun Zhu\textsuperscript{1}\thanks{Corresponding author.} \\
  \textsuperscript{1}Department of Computer Science and Technology, Tsinghua University \\
  \textsuperscript{2}South China University of Technology \\
}

\begin{document}

\maketitle

\begin{abstract}
Training-free camera control for pretrained flow-matching video generators is
a partial-observation inverse problem: a depth-warped guidance video supplies
noisy evidence on a subset of latent sites, which the sampler must reconcile
with the pretrained prior. Existing methods struggle to balance the trade-off between trajectory
adherence and visual quality and the heuristic guidance-strength
tuning lacks robustness. We propose \textbf{$h$-control}, which resolves
this dilemma through a structural change to the sampler: each outer
hard-replacement guidance step is augmented with an inner-loop
\emph{block-conditional pseudo-Gibbs refinement} on the unobserved complement
at the same noise level, with provable convergence to the partial-observation
conditional data law. To accelerate convergence on high-dimensional video latents, we exploit
their conditional locality, partitioning the unobserved complement into 3D
patches, each tracked by a custom mixing indicator that adaptively freezes
converged patches. On RealEstate10K and DAVIS,
\textbf{$h$-control} attains the best FVD against all seven training-free and
training-based competitors, outperforming every training-free baseline on
every reported metric.
\end{abstract}
\section{Introduction}
\label{sec:intro}

Reliable camera control is essential in filmmaking, simulation, virtual reality, and embodied AI, where the desired camera trajectory is given at inference time.
The conventional approach involves retraining a video generation model to follow trajectory instructions, which is costly~\cite{bai2025recammaster}.
Training-free camera control sidesteps this cost but remains difficult even for strong modern video diffusion and flow-matching models. 
During sampling, camera motion is entangled with object motion, scene layout, and appearance synthesis, making it much harder to prescribe a target viewpoint trajectory than to specify text semantics alone.

Specifically, existing camera-control methods can be organized into two paradigms.
\emph{Training-based} methods encode camera trajectories and fine-tune the generator to learn the mapping between these control signals and the output video~\citep{he2024cameractrl,he2025cameractrl,bahmani2024vd3d,bahmani2025ac3d,bai2025recammaster}. 
While effective, it requires model-specific training pipelines and relies on paired trajectory–video datasets that are expensive to acquire.
\emph{Training-free} methods instead synthesize a warped guidance video from the desired trajectory geometry and inject it as inference-time evidence~\citep{zhang2025recapture,hou2024training,zhou2025latent}, sidestepping the training cost entirely.

Essentially, training-free conditional generation is an inverse-problem question. 
Two competing forces arise during the sampling process.
The \emph{posterior} (trajectory codes) pulls the sample toward the conditioning evidence, while the \emph{prior} (pretrained models) shapes it according to the learned distribution~\citep{lu2023contrastive,he2023manifold,chung2022diffusion}. 
Tilting toward the posterior tightens adherence to the evidence but exposes the sample to noise and artifacts in the observation; tilting toward the prior preserves visual quality but loosens adherence. 
Existing training-free samplers balance these forces through guidance strength (the inverse temperature in energy guidance, or the gradient step size in DPS-style samplers~\citep{chung2022diffusion,ye2024tfg}). 
Tuning this scalar is heuristic and lacks robustness across settings.

Training-free camera control is a representative instance of this dilemma. 
Hard latent replacement~\citep{lugmayr2022repaint,singer2025time,song2025worldforge,wang2026coarse} operates at maximum guidance strength, treating the warped video as noise-free evidence and overwriting the masked latent regions at every denoising step. 
This delivers tight trajectory control; however, the warp is inherently imperfect due to depth-estimation errors, projection artifacts, and downsampled visibility, so the strong observation introduces unstable hallucinated content and artifacts along mask boundaries. 
Soft posterior guidance~\citep{chung2022diffusion} and windowed guidance~\citep{kynkaanniemi2024applying} relax the guidance, recovering visual quality at the cost of trajectory adherence.

Rather than retuning the guidance strength, we propose $h$-control to resolve the dilemma through a structural change to the sampler. 
Started with a hard masked latent replacement at the outer denoising step, which preserves tight trajectory control, we introduce a \emph{block-conditional pseudo-Gibbs refinement}: an inner loop that refines the prior on the unobserved region at the same noise level. 
The refinement extends Bengio's generalized denoising-auto-encoder Markov chain~\citep{bengio2013generalized} to a partial-observation conditional setting, and a sub-state ergodicity argument shows its iterates converge to the partial-observation conditional data law. 
To make this primitive practical at video scale, we leverage the property of latent flow-matching models that video latents exhibit short-range conditional dependence along all spatial-temporal axes. 
Building on this locality, we partition the unobserved complement into 3D patches. 
Each patch is then tracked by a mixing indicator that freezes it once its inner iterations have converged, accelerating convergence and improving overall sample quality.

Our contributions are three-fold:
\begin{enumerate}
    \item We construct a generalized denoising-auto-encoder chain on the unobserved complement, provably converging to the partial-observation conditional data law, which yields a principled inner-loop alternative to heuristic guidance-strength tuning.
    \item We empirically validate the conditional locality of video flow-matching latents, justifying a block-conditional Gibbs sampler that adaptively freezes locally-converged patches to accelerate convergence.
    \item On RealEstate10K and DAVIS, $h$-control attains the best FVD against all seven competitors and dominates training-free baselines across reported metrics.
\end{enumerate}

\section{Preliminaries}
\label{sec:prelim}

\paragraph{Latent flow matching.}
We work with pretrained latent flow-matching video generators
sampling in the VAE latent space $z\!=\!\mathcal E(x)$. Under the
variance-exploding reparameterization of the probability-flow
ODE~\citep{song2021scorebased,lipman2022flow,liu2022flow}, the
marginals are $z_t\!=\!(1-\sigma_t)\,z_0+\sigma_t\,\epsilon$ with
$\epsilon\!\sim\!\mathcal N(0,I)$ along the schedule
$\sigma_0\!=\!0,\sigma_1\!=\!1$. The learned velocity
$u_\theta(z_t,t,c)$, regressed on $\epsilon-z_0$, implements the
deterministic ODE $dz_t/dt\!=\!u_\theta(z_t,t,c)$ with $c$ the
external condition (e.g., a text prompt for T2V). Under the
velocity--score identity
$u_\theta\!=\!(\dot\sigma_t/\sigma_t)\,z_t-\dot\sigma_t\sigma_t\,\nabla_{z_t}\log p_t(z_t\!\mid\!c)$,
score-based reasoning transfers to flow matching. The plug-in
clean-prediction estimator
\begin{equation}
\label{eq:fm-cleanpred}
    \hat z_0(z_t,t,c) \;=\; z_t - \sigma_t\,u_\theta(z_t,t,c)
\end{equation}
is the primary interface to the guidance objectives in
Section~\ref{sec:method}.

\paragraph{Doob's $h$-transform for conditional sampling.}
Many tasks require conditioning the reverse process on an endpoint
event, e.g., a noisy inverse problem $y\!=\!\mathcal A(z_0)+n$.
Doob's $h$-transform~\citep{rogers2000diffusions} is the general
framework, tilting the path measure by a non-negative terminal
weight $h(z_0)$ (the likelihood $p(y\!\mid\!z_0)$ for soft
conditioning). With time-dependent
$h_t(z_t)\!=\!\mathbb E[h(z_0)\!\mid\!z_t]$~\citep{denker2024deft,wang2026coarse},
the conditioned reverse SDE adds an $h$-induced drift,
\begin{equation}
\label{eq:h-conditioned-reverse-sde}
    dz_t = \bigl[\,f(z_t,t) - g(t)^2\bigl(\nabla_{z_t}\log p_t(z_t)+\nabla_{z_t}\log h_t(z_t)\bigr)\,\bigr]\,dt + g(t)\,d\bar w_t,
\end{equation}
with drift $f$, diffusion $g$, and reverse-time Wiener process
$\bar w_t$; via the velocity--score identity, this gives the
controlled flow-matching velocity
\begin{equation}
\label{eq:h-controlled-velocity}
    u_\theta^{\rm ctrl}(z_t,t,c) \;=\; u_\theta(z_t,t,c) - \sigma_t\,\dot\sigma_t\,\nabla_{z_t}\log h_t(z_t).
\end{equation}
The $h$-induced drift is intractable in general; practical samplers
replace it with a tractable
surrogate~\citep{chung2022diffusion,wang2026coarse,wu2023practical},
e.g., the first-order Tweedie surrogate
$\nabla_{z_t}\log h_t(z_t)\!\approx\!\nabla_{z_t}\log h(\hat z_0(z_t,t,c))$
recovering DPS~\citep{chung2022diffusion}.
Appendix~\ref{sec:appendix-surrogates} reviews surrogate families
and locates $h$-control.

\section{Method}
\label{sec:method}

We treat training-free camera-controlled video generation as a
partial-observation inverse problem \cite{chung2022diffusion, lugmayr2022repaint}.
Given a target camera trajectory and a source video or first frame, we follow standard camera-control practice
\citep{singer2025time}: a monocular depth estimator lifts the source to a
point cloud, which is reprojected under the target poses to produce a
warped guidance video $\tilde x_0$ together with its pixel-space
visibility map. Encoding $\tilde x_0$ through the pretrained VAE and
downsampling the visibility map to the latent resolution yields the
warped latent $\tilde z_0=\mathcal E(\tilde x_0)$ and a binary mask
$M\in\{0,1\}^{L\times H\times W}$ marking the geometrically observed
sites on the latent grid of temporal length $L$ and spatial size
$H\!\times\!W$. Both signals are noisy because they
inherit depth-estimation errors, boundary
artifacts, and $M$ is only approximate due to the downsampling.

\paragraph{Vanilla $h$-transform guidance.}
We frame camera control as posterior sampling under a partial Gaussian
observation model. With the pretrained video generator as the prior
$p(z_0)$ and the warped latent $\tilde z_0$ as noisy evidence on the
masked sites,
\begin{equation}
    p(z_0\mid\tilde z_0,M)\;\propto\;p(z_0)\,\ell(\tilde z_0\mid z_0,M),
    \qquad
    \ell(\tilde z_0\mid z_0,M)\;\propto\;\exp\!\Bigl(-\tfrac{1}{2\tau^2}\|M\odot(z_0-\tilde z_0)\|_2^2\Bigr),
\end{equation}
with $\tau^{-2}$ the per-site observation confidence. This is the
Doob-transformed reverse process of
Eq.~\eqref{eq:h-conditioned-reverse-sde} with terminal weight
$h(z_0)=\ell(\tilde z_0\mid z_0,M)$. Following DPS
\citep{chung2022diffusion}, we approximate the intractable $h$-induced
drift by the first-order Tweedie surrogate
$\nabla_{z_t}\log h_t(z_t)\approx\nabla_{z_t}\log h(\hat z_0(z_t,t,c))$
and drop the denoiser Jacobian (Appendix~\ref{sec:appendix-surrogates}).
Substituting into Eq.~\eqref{eq:h-controlled-velocity} yields
\begin{equation}
\label{eq:m-vctrl}
    u_\theta^{\rm ctrl}(z_t,t,c)
    \;=\;
    u_\theta(z_t,t,c)
    \;+\;
    \frac{\sigma_t\dot\sigma_t}{\tau^2}\,M\odot\bigl(\hat z_0(z_t,t,c)-\tilde z_0\bigr),
\end{equation}
a soft pull toward $\tilde z_0$ on observed sites with strength
$\tau^{-2}$. As $\tau\!\to\!0$, the likelihood concentrates on
$\{z_0:M\odot z_0=M\odot\tilde z_0\}$ and
$\hat z_0^{\rm ctrl}=z_t-\sigma_t u_\theta^{\rm ctrl}$ collapses to
the hard masked replacement
\begin{equation}
\label{eq:m-obs-update}
    \hat z_0^{\rm obs}
    \;=\;
    M\odot\tilde z_0 \;+\; (1-M)\odot\hat z_0(z_t,t,c).
\end{equation}

We adopt Eq.~\eqref{eq:m-obs-update} as the outer guidance step of $h$-control:
the pretrained flow is preserved on $1\!-\!M$, and finite-confidence
behavior is restored by the inner refinement of
Section~\ref{sec:gibbs-refinement}.

\subsection{Conditional Pseudo-Gibbs Refinement}
\label{sec:gibbs-refinement}

The hard-replacement update Eq.~\eqref{eq:m-obs-update} is the
canonical training-free inpainting
rule~\citep{lugmayr2022repaint,singer2025time,song2025worldforge},
but two assumptions break here. The warped latent $\tilde z_0$
inherits depth-estimation errors and projection artifacts. And the unobserved support
$1\!-\!M$, left to the pretrained prior alone, must here follow the
target camera trajectory's spatio-temporal geometry rather than
merely look plausible. We address both by coupling each outer step
with an inner refinement loop at the same noise level.

\textbf{Inner-loop refinement.}
At fixed $\sigma_t$, the flow-matching backbone acts as a denoiser:
$\hat z_0(z_t,t,c)$ is its reconstruction from a noisy input.
\citet{bengio2013generalized} show that iterating perturb-and-redenoise
defines a generalized denoising auto-encoder (DAE) Markov chain whose
stationary distribution converges to $p(z_0)$. We extend
this primitive to a partial-observation conditional setting by
(i)~holding the observed sites at a freshly drawn noised pin from the
OT-FM forward kernel at level $\sigma_t$,
\begin{equation}
\label{eq:m-noised-obs}
    \bar z_t \;=\; (1-\sigma_t)\,\tilde z_0 \;+\; \sigma_t\,\xi_{\rm obs},
    \qquad \xi_{\rm obs}\sim\mathcal N(0,I),
\end{equation}
and (ii)~restricting the perturb step to the unobserved support.
Initializing $\hat z_0^{(0)}\!=\!\hat z_0^{\rm obs}$ from
Eq.~\eqref{eq:m-obs-update}, the inner chain alternates
\begin{align}
\label{eq:m-perturb}
    z_t^{(j)}
    &\;=\;
    (1-M)\odot\bigl((1-\sigma_t)\,\hat z_0^{(j-1)}+\sigma_t\,\xi^{(j)}\bigr)
    \;+\;
    M\odot\bar z_t,
    && \xi^{(j)}\!\sim\!\mathcal N(0,I),\\
\label{eq:m-redenoise}
    \hat z_0^{(j)}
    &\;=\;
    z_t^{(j)} \;-\; \sigma_t\,u_\theta\bigl(z_t^{(j)},t,c\bigr),
    && j=1,\dots,J_{\max},
\end{align}
followed by a hard-mask write-back that composes the two branches
before one FlowMatch Euler step advances the latent. The write-back
consumes the \emph{Polyak-averaged Gibbs readout}
$\bar z_0\!=\!\tfrac{1}{J_{\max}}\sum_{j=1}^{J_{\max}}\hat z_0^{(j)}$
rather than the last iterate $\hat z_0^{(J_{\max})}$:
\begin{equation}
\label{eq:m-writeback}
    \hat z_0^{\rm final}
    \;=\;
    M\odot\hat z_0^{\rm obs} \;+\; (1-M)\odot\bar z_0.
\end{equation}
By~\citet{polyak1992acceleration} on iterate averaging, this trades the
per-sample posterior variance for an
$\mathcal O(\tau_{\rm int}/J_{\max})$ Monte Carlo variance --- exactly
what the FlowMatch Euler step, which is linear in
$\hat z_0^{\rm final}$, expects.
Pinning observed sites at $\bar z_t$ rather than at the outer-state
$z_t|_M$ keeps the inner-loop conditioning at the correct
noised-observation distribution and decouples the chain from
outer-state drift. With $\sigma_t$ and $\bar z_t|_M$ held fixed,
Eqs.~\eqref{eq:m-perturb}--\eqref{eq:m-redenoise} form a generalized
DAE chain on the sub-state $z|_{1-M}$ whose stationary distribution
admits a closed-form characterization.

\begin{proposition}[Stationary distribution of conditional pseudo-Gibbs]
\label{prop:gibbs-stationary}
Fix denoising step $t$ with noise level $\sigma_t > 0$, mask $M$,
warped latent $\tilde z_0$, and pin $\bar z_t$ from
Eq.~\eqref{eq:m-noised-obs}. Assume \emph{(A1)} the denoiser
$u_\theta(\cdot,t,c)$ is consistent with the level-$\sigma_t$ joint
conditional on $c$ (denoising-score-matching optimality), and
\emph{(A2)} the perturbation kernel restricted to $1\!-\!M$ has full
support on a bounded volume. Then
Eqs.~\eqref{eq:m-perturb}--\eqref{eq:m-redenoise} define an ergodic
Markov chain whose clean-prediction iterates
$\hat z_0^{(j)}|_{1-M}$ converge in distribution, as
$j\!\to\!\infty$, to the partial-observation conditional data law
$p\!\bigl(z_0|_{1-M}\,\big|\,\bar z_t|_M\bigr)$.
\end{proposition}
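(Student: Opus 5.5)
The plan is to reduce the claim to Bengio et al.'s generalized DAE Markov chain theorem~\citep{bengio2013generalized}, applied to the sub-state $x:=z|_{1-M}$ with the observed block held fixed at the pin $\bar z_t|_M$.

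\textbf{Writing the two-step kernel.} Write the chain as the alternation $\hat z_0^{(j-1)}|_{1-M}\mapsto \tilde x^{(j)}:=z_t^{(j)}|_{1-M}\mapsto \hat z_0^{(j)}|_{1-M}$. The perturb step Eq.~\eqref{eq:m-perturb} is the corruption kernel
\[
C(\tilde x\mid x)=\mathcal N\bigl(\tilde x;(1-\sigma_t)x,\sigma_t^2 I\bigr),
\]
which is independent of the pinned coordinates. Under (A1), the redenoise step Eq.~\eqref{eq:m-redenoise} restricted to $1-M$ is read as a draw, or the posterior reconstruction, from
\[
P\bigl(x\mid \tilde x,\bar z_t|_M\bigr)\;\propto\;C(\tilde x\mid x)\,p\bigl(x\mid \bar z_t|_M\bigr).
\]
This is the level-$\sigma_t$ posterior of the clean complement given the full noised state $(\tilde x,\bar z_t|_M)$. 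Its justification is that DSM optimality makes $u_\theta$ the Bayes-optimal reconstruction of the joint $p_t(z_0\mid z_t,c)$. Conditioning the joint on the observed noised block then yields exactly the conditional reconstruction law, because the forward noise is coordinate-wise independent across $M$ and $1-M$.

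\textbf{Stationarity.} With $\pi(x):=p(x\mid \bar z_t|_M)$, the pair $(C,P)$ is the data-augmentation (two-block Gibbs) sampler on the joint $\pi(x)C(\tilde x\mid x)$. Its $x$-marginal kernel $K(x'\mid x)=\int P(x'\mid\tilde x)\,C(\tilde x\mid x)\,d\tilde x$ therefore satisfies detailed balance with respect to $\pi$, so $\pi$ is stationary. This is the content of Bengio's Theorem~1, transplanted to the conditional setting by fixing $\bar z_t|_M$ throughout.

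\textbf{Ergodicity.} This is the step I expect to be the main obstacle. Assumption (A2) supplies what Bengio's Corollary needs: the restricted Gaussian corruption has full support on a bounded volume, and $P$ is positive there. Then $K$ has a strictly positive density on that compact set, which gives $\pi$-irreducibility and aperiodicity; a Doeblin minorization $K(\cdot\mid x)\ge\epsilon\,\nu(\cdot)$ is available directly. Standard Markov chain theory then gives uniqueness of $\pi$ and convergence in total variation, hence in distribution, of $\hat z_0^{(j)}|_{1-M}$ to $\pi$ from the initialization $\hat z_0^{\rm obs}$.

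The delicate points are twofold. First, one must make precise the sense in which a deterministic denoiser output is treated as a draw from $P$; following Bengio, I would interpret (A1) as stating that the reconstruction distribution is the true posterior. Second, one must confirm that conditioning on the fixed pin, rather than on a random one, does not break the Gibbs structure. It does not, because $\bar z_t|_M$ is never updated inside the loop, so it acts as a constant parameter of both kernels.
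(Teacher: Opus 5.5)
Your proposal takes the same route as the paper. The paper's sub-state lemma identifies the restricted chain as a generalized DAE chain with corruption $\mathcal N((1-\sigma_t)z_0|_{1-M},\sigma_t^2 I)$ and $\bar z_t|_M$ as fixed exogenous context, then invokes Bengio et al.'s Theorem~1 with $p(X)\leftrightarrow p(z_0|_{1-M}\mid\bar z_t|_M)$, using (A1) for consistency and (A2) for ergodicity; you open that black box with the Bayes factorization (via independence of the forward noise across $M$ and $1-M$), data-augmentation detailed balance, and a Doeblin minorization.

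The paper resolves your flagged delicate point exactly as you do, by taking $P_\theta$ to be ``the implicit conditional induced by the denoiser''. That reading is load-bearing, not cosmetic: DSM optimality by itself makes Eq.~\eqref{eq:m-redenoise} the posterior \emph{mean}, and a noise-then-mean chain does not preserve the target. For a one-dimensional standard Gaussian at $\sigma_t=1/2$ it is $x'=(x+\xi)/2$, whose stationary variance is $1/3$ rather than $1$.
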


The proof proceeds by reduction to the sub-state version of
\citet{bengio2013generalized} Theorem~1; it is given in
Appendix~\ref{sec:appendix-complement-theory}, alongside the
comparison with RePaint~\citep{lugmayr2022repaint} and
SRVS~\citep{jang2026self}.

\begin{figure}[t]
  \centering
  \begin{minipage}{0.8\linewidth}
    \centering
    \includegraphics[width=\linewidth]{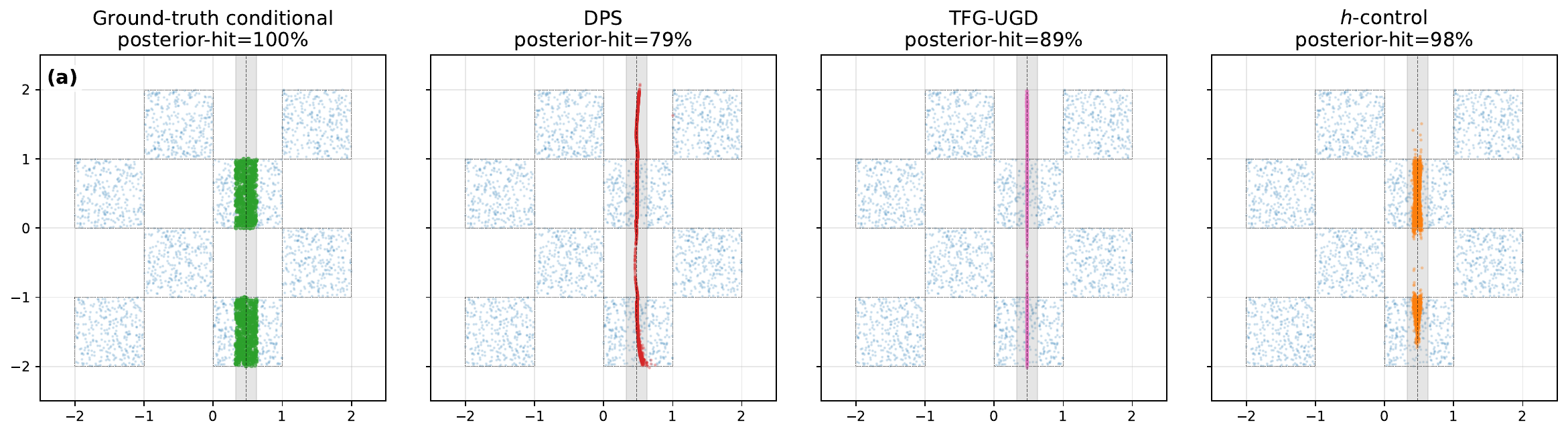}%
    \phantomsubcaption\label{fig:toy-a}\\[-0.6ex]
    \begin{minipage}[t]{0.5\linewidth}
      \centering
      \includegraphics[width=\linewidth]{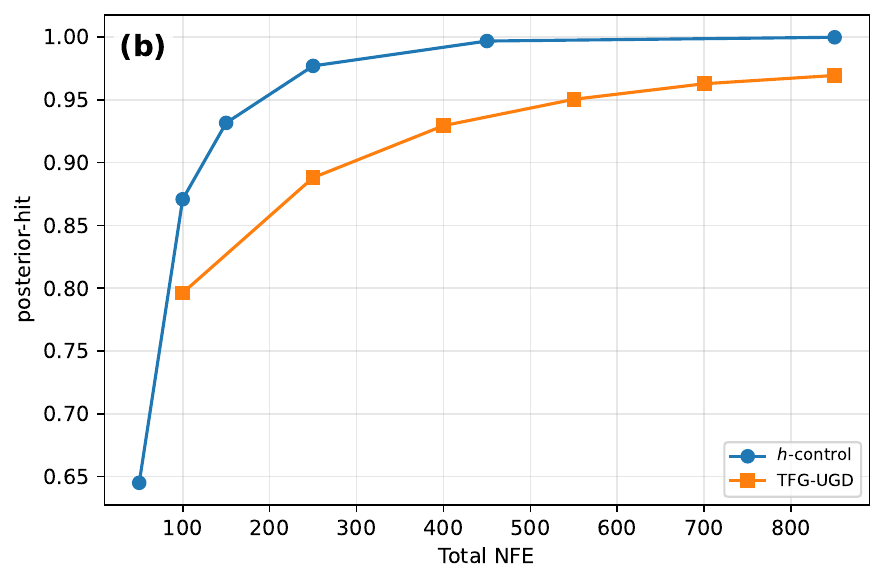}%
      \phantomsubcaption\label{fig:toy-b}
    \end{minipage}%
    \begin{minipage}[t]{0.5\linewidth}
      \centering
      \includegraphics[width=\linewidth]{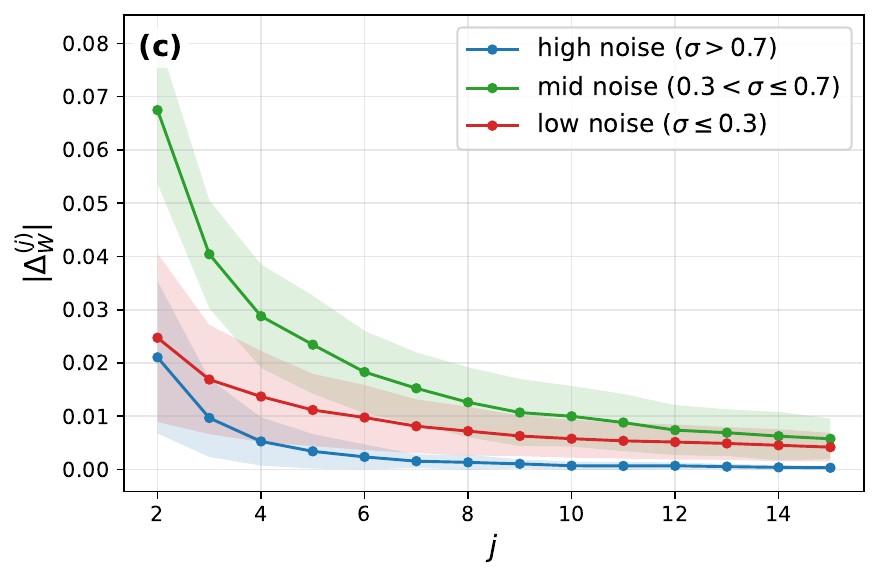}%
      \phantomsubcaption\label{fig:toy-c}
    \end{minipage}
  \end{minipage}
  \caption{2D checkerboard toy example.
  \textbf{(a)}~Sample clouds at $y_{\rm obs}\!\approx\!0.5$ for
  ground truth, DPS~\citep{chung2022diffusion},
  TFG-UGD~\citep{ye2024tfg}, and $h$-control (left to right).
  \textbf{(b)}~Posterior-hit rate vs.\ total NFE for $h$-control
  (varying $J_{max}$) and TFG-UGD (varying $N_{\rm recur}$).
  \textbf{(c)}~$|\Delta_W^{(j)}|$ vs.\ inner iteration $j$ binned
  by noise band.}
  \label{fig:toy}
\end{figure}

\textbf{Mixing detection.}
We monitor convergence through the trajectory of $\hat z_0^{(j)}$
itself. Let $\sigma_W^{(j)}$ denote the running standard deviation of
the inner iterates on the unobserved support, maintained by Welford's
online algorithm~\citep{welford1962note}. The $\Delta$-Welford
indicator is its first difference,
\begin{equation}
    \Delta_W^{(j)} \;=\; \sigma_W^{(j)}\,-\,\sigma_W^{(j-1)},
\end{equation}
with $|\Delta_W^{(j)}|\!\to\!0$ signalling that the chain has stopped
exploring.

\textbf{2D toy validation.}
We validate the construction on a 2D 8-square checkerboard with a
noisy partial observation on the first coordinate, anchored at
$y_{\rm obs}\!\approx\!0.5$ where the vertical constraint line yields a 2-mode conditional
posterior. The toy experiment establishes three claims:
\textit{(i)~Mode coverage.} As in Fig.~\ref{fig:toy-a},  
DPS~\citep{chung2022diffusion} and TFG-UGD~\citep{ye2024tfg} spread
mass along the constraint line without locking onto either mode, while $h$-control concentrates on both
conditional modes.
Disabling the inner loop reduces $h$-control to DPS-like behaviour,
isolating the refinement as the source of the gain.
\textit{(ii)~Compute scaling.} The posterior-hit rate of $h$-control
rises monotonically with the per-step probe count and strictly
dominates TFG-UGD at every matched NFE budget (Fig.~\ref{fig:toy-b}):
compute spent inside the unobserved sub-state pays off more than
TFG-style global recurrence.
\textit{(iii)~Indicator fidelity.} $|\Delta_W^{(j)}|$ decays over the
same $j$-range in which the posterior-hit curve of (b) plateaus,
across every noise band (Fig.~\ref{fig:toy-c}); the indicator is
therefore a reliable ground-truth-free trigger for the early-freeze
gate of Section~\ref{sec:method-locality-gibbs}. Detailed toy example implementation is in Appendix~\ref{sec:appendix-toy}.

\subsection{From Toy to Video: Locality and Block-Conditional Gibbs}
\label{sec:method-locality-gibbs}

At video scale --- the Wan~2.2 latent has shape $(C,L,H,W)$ with
$C\!=\!48$ and $\sim\!10^5$ sites --- a generalized DAE chain on
this full sub-state mixes too slowly (per-probe sampling variance
scales with state dimension). We therefore propose a novel \emph{block-conditional Gibbs} sampler over 3D patches, paired with
a per-patch $\Delta$-Welford freeze gate
(Section~\ref{sec:gibbs-refinement}); block updates dominate
single-site Gibbs in autocorrelation
reduction~\citep{liu1994covariance,roberts1997updating}. Such a
partition is legitimate only if the latent's conditional dependence
decays along each axis on a short range --- which we verify
empirically next.

\textbf{Latents are local.} Let $z^{(n)}_{l,h,w}\!\in\!\mathbb{R}^C$ denote the Wan~2.2 latent
token of video $n$ at grid position $(l,h,w)$. The diagnostic asks a
single question, one axis at a time: pick $\alpha\!\in\!\{L,H,W\}$ and
two positions $\beta,\gamma$ on $\alpha$; are the tokens at $\beta$
and $\gamma$ conditionally independent given the tokens at all other
positions on that line? The natural unit-free statistic for
multichannel tokens is the top \emph{block partial correlation}
$\rho_1(\widehat R_{\beta\gamma})\!\in\![0,1]$, the strongest linear
dependence between $z^{(n)}_\beta$ and $z^{(n)}_\gamma$ \emph{after
regressing out the rest of the line}; under joint Gaussianity,
$\rho_1\!=\!0$ is the exact conditional-independence test. See
Appendix~\ref{sec:appendix-locality-validation} for the detailed
implementation of the block partial correlation.

Figure~\ref{fig:heatmap} shows the off-diagonal
$\rho_1(\widehat R_{\beta\gamma})$ decays to a finite-sample noise
floor of $\le\!0.1$ within $|\beta-\gamma|\!\le\!2$ on each of
$L,H,W$: the latent is approximately order-$2$ Markov. This is conditional independence, not
separability --- the joint $p(z_0)$ stays non-factorisable, with
distant coordinates marginally correlated via chains of neighbours
--- and is consistent with the bounded receptive fields of
convolutional VAE
encoders~\citep{luo2016understanding,rombach2022high,esser2021taming,wan2025wan}.
The same locality is preserved on the model's clean prediction
$\hat z_0(z_t,t,c)$ across $\sigma_t\!\in\!\{0.1,0.3,0.5,0.7,0.9\}$
(Appendix~\ref{sec:appendix-locality-validation}).

\begin{figure}[!tb]
  \centering
  \includegraphics[width=0.8\linewidth]{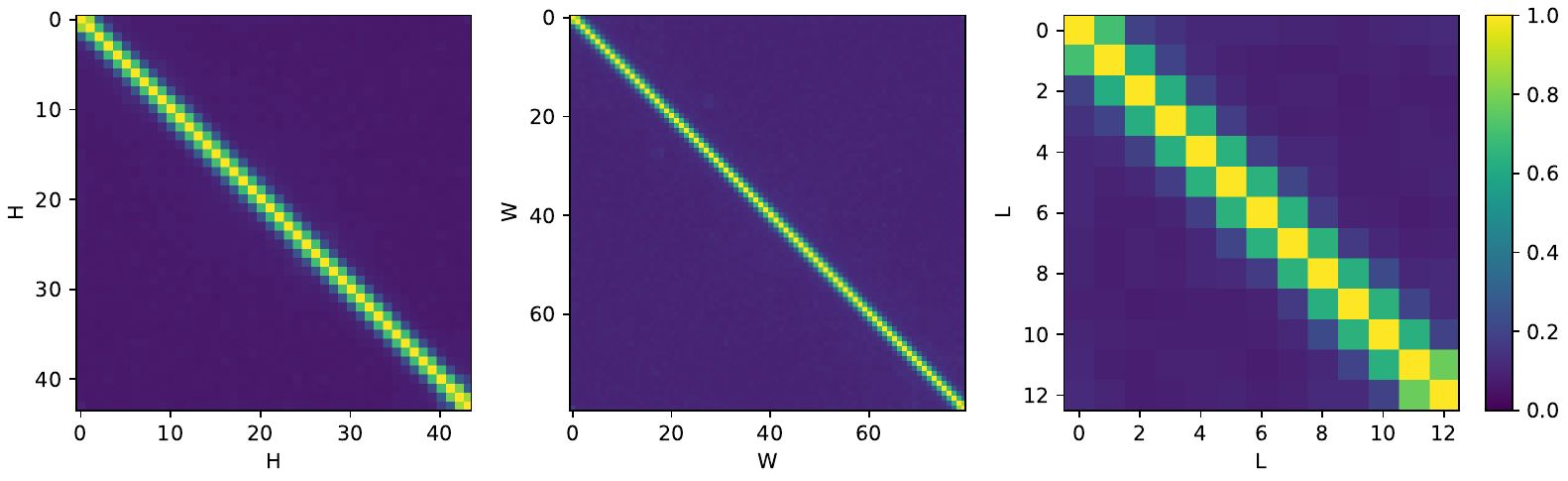}
  \caption{Top canonical partial correlation
  $\rho_1(\widehat R_{\beta\gamma})$ along the $H$, $W$, $L$ axes of
  the Wan~2.2 latent ($N\!=\!200$ encoded videos). Off-diagonal mass
  concentrates within $|\beta-\gamma|\!\le\!2$.}
  \label{fig:heatmap}
\end{figure}

\textbf{Patch-freeze block-conditional Gibbs.}
The locality structure justifies partitioning the unobserved support
$1\!-\!M$ into 3D patches $\{\mathcal P_g\}_{g=1}^{G}$ of size
$p_l\!\times\!p_h\!\times\!p_w$ chosen larger than the order-$2$
dependence radius. Locality then constrains
$p(z_0|_{\mathcal P_g}\!\mid\!z_0|_{\bar{\mathcal P}_g})$ to depend
only on a halo around $\mathcal P_g$, bounding the block-Gibbs
autocorrelation and allowing each patch to be frozen independently
once its inner chain stabilizes. Within each patch the inner-loop
perturb-and-redenoise of Eqs.~\eqref{eq:m-perturb}--\eqref{eq:m-redenoise}
is a block Gibbs update on the sub-state $z|_{\mathcal P_g\cap(1-M)}$
conditioned on a thin halo of neighboring patches plus the pinned
observed sites. We index the $\Delta$-Welford indicator of
Section~\ref{sec:gibbs-refinement} by patch as $\Delta_W^{(j)}(g)$
and track its running peak
$\Delta_W^{\max}(g)\!=\!\max_{j'\le j}|\Delta_W^{(j')}(g)|$. We declare a
patch \emph{stable} when its $|\Delta_W(g)|$ stays below a fraction
$\kappa\!\in\!(0,1]$ of $\Delta_W^{\max}(g)$ in two consecutive iterations:
\begingroup
\setlength{\abovedisplayskip}{3pt}%
\setlength{\belowdisplayskip}{3pt}%
\setlength{\abovedisplayshortskip}{3pt}%
\setlength{\belowdisplayshortskip}{3pt}%
\setlength{\jot}{1pt}%
\begin{equation}
\label{eq:m-stable-mask}
    S^{(j)}_g \;=\; \mathbf 1\!\bigl[\,|\Delta_W^{(j)}(g)|<\kappa \Delta_W^{\max}(g)\bigr]\,\wedge\,\mathbf 1\!\bigl[\,|\Delta_W^{(j-1)}(g)|<\kappa \Delta_W^{\max}(g)\bigr].
\end{equation}
Stable patches hold their previous clean-prediction iterate while
the rest update as usual:
\begin{equation}
\label{eq:m-freeze}
    \hat z_0^{(j)}\big|_{\mathcal P_g} \;\leftarrow\; S^{(j)}_g\,\hat z_0^{(j-1)}\big|_{\mathcal P_g} \;+\; (1-S^{(j)}_g)\,\hat z_0^{(j)}\big|_{\mathcal P_g},\quad g=1,\dots,G.
\end{equation}
\endgroup
The loop exits early when the fraction of stable patches
$\sum_g S^{(j)}_g/G$ exceeds $\nu$, and is capped at $J_{\max}$.

Algorithm~\ref{alg:hcontrol} gives the full $h$-control sampler
over the discretized schedule $\{t_k\}_{k=0}^{K}$
($\sigma_k\!=\!\sigma_{t_k}$, $z_k\!=\!z_{t_k}$) with camera-control
window $[s,e)\!\subseteq\![0,K)$ and inner-loop budget $J_{\max}$, where $K$ denotes the number of discrete sampling steps.

\begin{algorithm}[t]
\caption{$h$-control: training-free camera-controlled flow-matching sampler.}
\label{alg:hcontrol}
\begin{algorithmic}[1]
\REQUIRE pretrained $u_\theta$, conditioning $c$; warped $\tilde z_0$, mask $M$; window $[s,e)$; $J_{\max}$; $\kappa,\nu\!\in\!(0,1]$.
\FOR{$k=0,\dots,K-1$}
  \STATE $\hat z_0\leftarrow z_k-\sigma_k u_\theta(z_k,t_k,c)$
  \IF{$s\le k<e$}
    \STATE $\hat z_0^{\rm obs}\leftarrow M\odot\tilde z_0+(1-M)\odot\hat z_0$
           \hfill Eq.~\eqref{eq:m-obs-update}
    \STATE $\bar z_k\leftarrow(1-\sigma_k)\tilde z_0+\sigma_k\xi_{\rm obs}$;\;
           $\hat z_0^{(0)}\leftarrow\hat z_0^{\rm obs}$;\;
           $\bar z_0,\,\Delta_W^{(0)}(g),\,\Delta_W^{\max}(g),\,S^{(0)}_g\!\leftarrow\!0$
           \hfill Eq.~\eqref{eq:m-noised-obs}
    \FOR{$j=1,\dots,J_{\max}$}
      \STATE $z_k^{(j)}\leftarrow(1-M)\odot\bigl((1-\sigma_k)\hat z_0^{(j-1)}+\sigma_k\xi^{(j)}\bigr)+M\odot\bar z_k$
             \hfill Eq.~\eqref{eq:m-perturb}
      \STATE $\hat z_0^{(j)}\leftarrow z_k^{(j)}-\sigma_k u_\theta(z_k^{(j)},t_k,c)$
             \hfill Eq.~\eqref{eq:m-redenoise}
      \STATE Refresh running $\Delta_W^{(j)}(g), \Delta_W^{\max}(g)$ (Welford bookkeeping)
      \STATE $S^{(j)}_g\!\leftarrow\!\mathbf 1\!\bigl[|\Delta_W^{(j)}(g)|\!<\!\kappa \Delta_W^{\max}(g)\bigr]\!\wedge\!\mathbf 1\!\bigl[|\Delta_W^{(j-1)}(g)|\!<\!\kappa \Delta_W^{\max}(g)\bigr]$
             \hfill Eq.~\eqref{eq:m-stable-mask}
      \STATE $\hat z_0^{(j)}|_{\mathcal P_g}\!\leftarrow\!S^{(j)}_g\hat z_0^{(j-1)}|_{\mathcal P_g}\!+\!(1\!-\!S^{(j)}_g)\hat z_0^{(j)}|_{\mathcal P_g}$
             \hfill Eq.~\eqref{eq:m-freeze}
      \STATE $\bar z_0\leftarrow\bar z_0+(\hat z_0^{(j)}-\bar z_0)/j$
             \hfill Polyak-averaged readout
      \STATE \textbf{break if} $\sum_g S^{(j)}_g/G > \nu$
             \hfill early exit on stable fraction
    \ENDFOR
    \STATE $\hat z_0^{\rm final}\leftarrow M\odot\hat z_0^{\rm obs}+(1-M)\odot\bar z_0$
           \hfill Eq.~\eqref{eq:m-writeback}
  \ELSE
    \STATE $\hat z_0^{\rm final}\leftarrow\hat z_0$
  \ENDIF
  \STATE $z_{k+1}\leftarrow z_k+(\sigma_{k+1}-\sigma_k)(z_k-\hat z_0^{\rm final})/\sigma_k$
\ENDFOR
\end{algorithmic}
\end{algorithm}

\section{Experiments}
\label{sec:exp}

\begingroup
\setlength{\textfloatsep}{4pt plus 1pt minus 1pt}
\setlength{\intextsep}{4pt plus 1pt minus 1pt}
\setlength{\floatsep}{4pt plus 1pt minus 1pt}
\setlength{\abovecaptionskip}{2pt}
\setlength{\belowcaptionskip}{0pt}
\renewcommand{\arraystretch}{0.9}


\textbf{Setup.}
We evaluate on \textit{RealEstate10K} (200 randomly sampled test
scenes) and \textit{DAVIS} (84 videos obtained from three
controlled trajectories of varying angular extent on each of 28
dynamic scenes), comparing $h$-control against four training-free
(WorldForge~\citep{song2025worldforge},
TTM~\citep{singer2025time},
RePaint~\citep{lugmayr2022repaint},
Coarse-Guided~\citep{wang2026coarse})
and three training-based ($\dagger$;
TrajectoryAttention~\citep{xiao2024trajectory},
TrajectoryCrafter~\citep{yu2025trajectorycrafter},
ReCamMaster~\citep{bai2025recammaster})
camera controllers. $h$-control and every training-free baseline
share the public Wan~2.2~\citep{wan2025wan}
(Text-Image-to-Video-5B) flow-matching backbone, ensuring a
controlled comparison in which the sampler is the only varied
factor; the warped guidance video and its visibility mask are
produced by the TTM/WorldForge depth-based lift-and-reproject
pipeline, while ReCamMaster takes its native camera-trajectory
input. All experiments run on
NVIDIA~A40 GPUs in mixed precision.

\textbf{Evaluation metrics.}
For visual quality, we report Fr\'echet Video Distance
(FVD)~\citep{unterthiner2018towards} as the primary distributional
measure on both benchmarks, LPIPS and SSIM against held-out
reference views on RealEstate10K, and
CLIP-v~\citep{kuang2024collaborative} and
CLIP-f~\citep{radford2021learning} on both benchmarks. For
trajectory adherence, camera poses re-extracted with
Mega-SAM~\citep{li2025megasam} yield the Absolute Trajectory Error
(ATE), Relative Rotation Error (RRE), and Relative Translation
Error (RTE).

\subsection{Comparison with State-of-the-Art Methods}
\label{sec:exp-sota}

\textbf{Static scenes (RealEstate10K).}
Table~\ref{tab:re10k-quant} reports quantitative results.
$h$-control matches or beats every training-free baseline on every metric: FVD drops from $157.08$ (Coarse-Guided, the
strongest hard-replacement baseline) to $\mathbf{129.25}$, a
$17.7\%$ reduction, with consistent gains on trajectory error, perceptual quality (LPIPS, SSIM tied
at $0.62$), and CLIP-based scores. Against the three
training-based controllers --- each requiring dedicated
camera-conditioned fine-tuning of the backbone --- $h$-control
still attains the best FVD overall by a wide margin, nearly $2\!\times$ lower than TrajectoryCrafter ($253.27$) and
$4.8\!\times$ lower than TrajectoryAttention ($616.81$), and is
best on six of eight metrics. The two columns where a training-based method
leads outright (SSIM $0.64$ by TrajectoryAttention; RRE $0.4235$ by
TrajectoryCrafter) come at FVDs $2$--$5\!\times$ ours, indicating
that the additional training cost buys a narrow advantage on a
single axis without delivering overall distributional fidelity.
Figure~\ref{fig:real10k_exp} shows qualitative comparisons: $h$-control synthesises photorealistic novel views
aligned with the target poses, while baselines exhibit visible
warp seams and disocclusion artifacts.
\begin{table}[h]
  \caption{Visual quality and trajectory-error results on
  \textbf{RealEstate10K}. Best across all methods in \textbf{bold},
  second-best \underline{underlined}. $^\dagger$ marks training-based
  methods.}
  \centering
  \footnotesize
  \setlength{\tabcolsep}{4pt}
  \renewcommand{\arraystretch}{0.7}
  \begin{tabular}{lcccccccc}
  \toprule
                & FVD$\downarrow$ & ATE\,(cm)$\downarrow$ & RRE\,(deg)$\downarrow$ & RTE\,(cm)$\downarrow$ & LPIPS$\downarrow$ & SSIM$\uparrow$ & CLIP-f$\uparrow$ & CLIP-v$\uparrow$ \\
  \midrule
  WorldForge                    & 158.47 & 5.8838  & 0.4373 & 1.4176 & \underline{0.42}$\pm$0.20 & 0.56$\pm$0.17 & 97.87 & 92.54 \\
  TTM                           & 192.88 & 3.8700  & 0.4751 & 1.1745 & 0.48$\pm$0.19 & 0.52$\pm$0.17 & 98.16 & \underline{93.39} \\
  Coarse-Guided                 & \underline{157.08} & \underline{3.5719} & 0.4403 & \underline{0.9535} & \underline{0.42}$\pm$0.20 & 0.62$\pm$0.18 & 97.84 & 91.60 \\
  RePaint                       & 265.00 & 10.3861 & 0.7438 & 2.8434 & 0.52$\pm$0.21 & 0.54$\pm$0.17 & 97.66 & 90.13 \\
  \midrule
  TrajectoryAttention$^\dagger$ & 616.81 & 18.0606 & 0.8492 & 3.2817 & 0.45$\pm$0.20 & \textbf{0.64}$\pm$0.16 & 97.10 & 87.23 \\
  TrajectoryCrafter$^\dagger$   & 253.27 & 4.1568  & \textbf{0.4235} & 0.9794 & 0.43$\pm$0.17 & \underline{0.63}$\pm$0.16 & 97.71 & 91.50 \\
  ReCamMaster$^\dagger$         & 299.33 & 5.1924  & 0.4627 & 1.6436 & 0.59$\pm$0.15 & 0.53$\pm$0.15 & \underline{98.21} & 89.34 \\
  \midrule
  \textbf{$h$-control (Ours)}   & \textbf{129.25} & \textbf{3.4920} & \underline{0.4321} & \textbf{0.9483} & \textbf{0.41}$\pm$0.18 & 0.62$\pm$0.17 & \textbf{98.52} & \textbf{93.80} \\
  \bottomrule
  \end{tabular}
  \label{tab:re10k-quant}
  \end{table}

\textbf{Dynamic scenes (DAVIS).}
Quantitative results are reported in Table~\ref{tab:davis-quant}.
Against the four training-free baselines, $h$-control is again
best on every metric --- FVD ($\mathbf{356.73}$ vs.\ WorldForge's
$383.53$), the three trajectory errors, and both CLIP-f ($\mathbf{97.45}$)
and CLIP-v ($\mathbf{90.34}$). The comparison with training-based
controllers exposes a clear quality-versus-control trade-off:
ReCamMaster, fine-tuned end-to-end as a pose-conditioned
generator, achieves the lowest trajectory errors in terms of ATE, RRE, and RTE.
This trajectory-metric lead is structural: only ReCamMaster
conditions directly on the camera trajectory, whereas all others operate on a depth-warped guidance video whose
projection errors inherently cap trajectory accuracy.
However, its FVD ($670.42$) is $1.9$ $\!\times$ ours, and worse
than every training-free baseline except RePaint --- and its CLIP
scores trail $h$-control by $2.1$ (CLIP-f) and $5.3$ (CLIP-v)
points.
$h$-control instead delivers the best FVD overall ($1.6$  $\!\times$
lower than the strongest training-based method
TrajectoryCrafter), the best CLIP-v and CLIP-f, and the
second-best trajectory errors after ReCamMaster, attaining the
most favourable balance between trajectory adherence and visual
quality among the seven controllers.
Figure~\ref{fig:davis-qualitative} visualises the same trade-off:
hard-replacement baselines show seams and unstable disoccluded
content, ReCamMaster's appearance degrades under dynamic content,
while $h$-control remains seam-free and geometrically faithful.

Across both regimes, $h$-control dominates every training-free baseline on every reported metric and attains the best FVD overall, closing the visual-quality gap to training-based controllers without any fine-tuning while preserving competitive trajectory control.

\begin{figure}[t]
  \centering
  \includegraphics[width=\dimexpr\linewidth-2\fboxsep-2\fboxrule\relax]{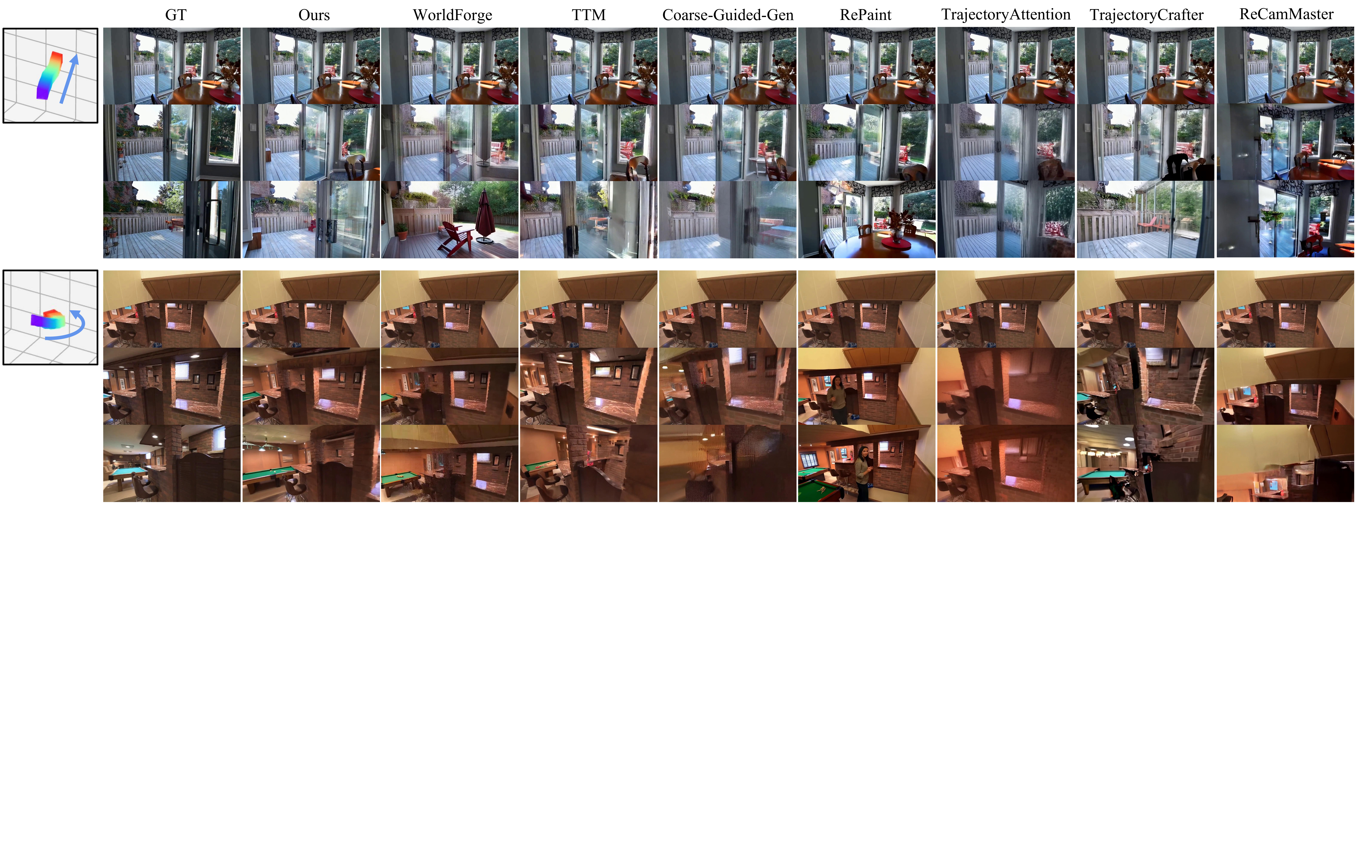}
  \caption{Qualitative results on RealEstate10K. Compared with the baselines, our method
  generates photorealistic novel views that align with the
  target camera poses.}
  \label{fig:real10k_exp}
\end{figure}

\begin{table}[t]
\caption{Trajectory-error and visual quality results on \textbf{DAVIS}. Best
across all methods in \textbf{bold}, second-best \underline{underlined}.
$^\dagger$ marks training-based methods.}
\centering
\footnotesize
\setlength{\tabcolsep}{4pt}
\renewcommand{\arraystretch}{0.7}
\begin{tabular}{lcccccc}
\toprule
& FVD$\downarrow$ & ATE\,(cm)$\downarrow$ & RRE\,(deg)$\downarrow$ & RTE\,(cm)$\downarrow$ & CLIP-f$\uparrow$ & CLIP-v$\uparrow$ \\
\midrule
WorldForge & \underline{$383.53$} & $8.2694$  & $1.9323$ & $5.1275$ & $97.05$ & $89.25$\\
TTM & $398.12$ & $7.8849$  & $1.9111$ & $4.3735$ & \underline{$97.35$} & \underline{$89.85$}\\
Coarse-Guided & $402.14$ & $9.0672$  & $2.1216$ & $4.3978$ & $96.40$ & $87.35$\\
RePaint & $690.83$ & $9.9780$  & $2.2468$ & $5.1842$ & $95.33$ & $74.76$\\
\midrule
TrajectoryAttention$^\dagger$ & $837.37$ & $18.3749$ & $3.6871$ & $7.2034$ & $93.88$ & $74.23$\\
TrajectoryCrafter$^\dagger$ & $576.30$ & $7.3827$  & $2.1045$ & $4.7158$ & $95.86$ & $83.21$\\
ReCamMaster$^\dagger$ & $670.42$ & $\mathbf{4.9770}$ & $\mathbf{0.6352}$ & $\mathbf{1.7271}$ & $95.39$ & $85.06$\\
\midrule
\textbf{$h$-control (Ours)}   & $\mathbf{356.73}$ & \underline{$6.8832$} & \underline{$1.8907$} & \underline{$4.2285$} & $\mathbf{97.45}$ & $\mathbf{90.34}$ \\
\bottomrule
\end{tabular}
\label{tab:davis-quant}
\end{table}

\begin{figure}[t]
  \centering
  \includegraphics[width=\dimexpr\linewidth-2\fboxsep-2\fboxrule\relax]{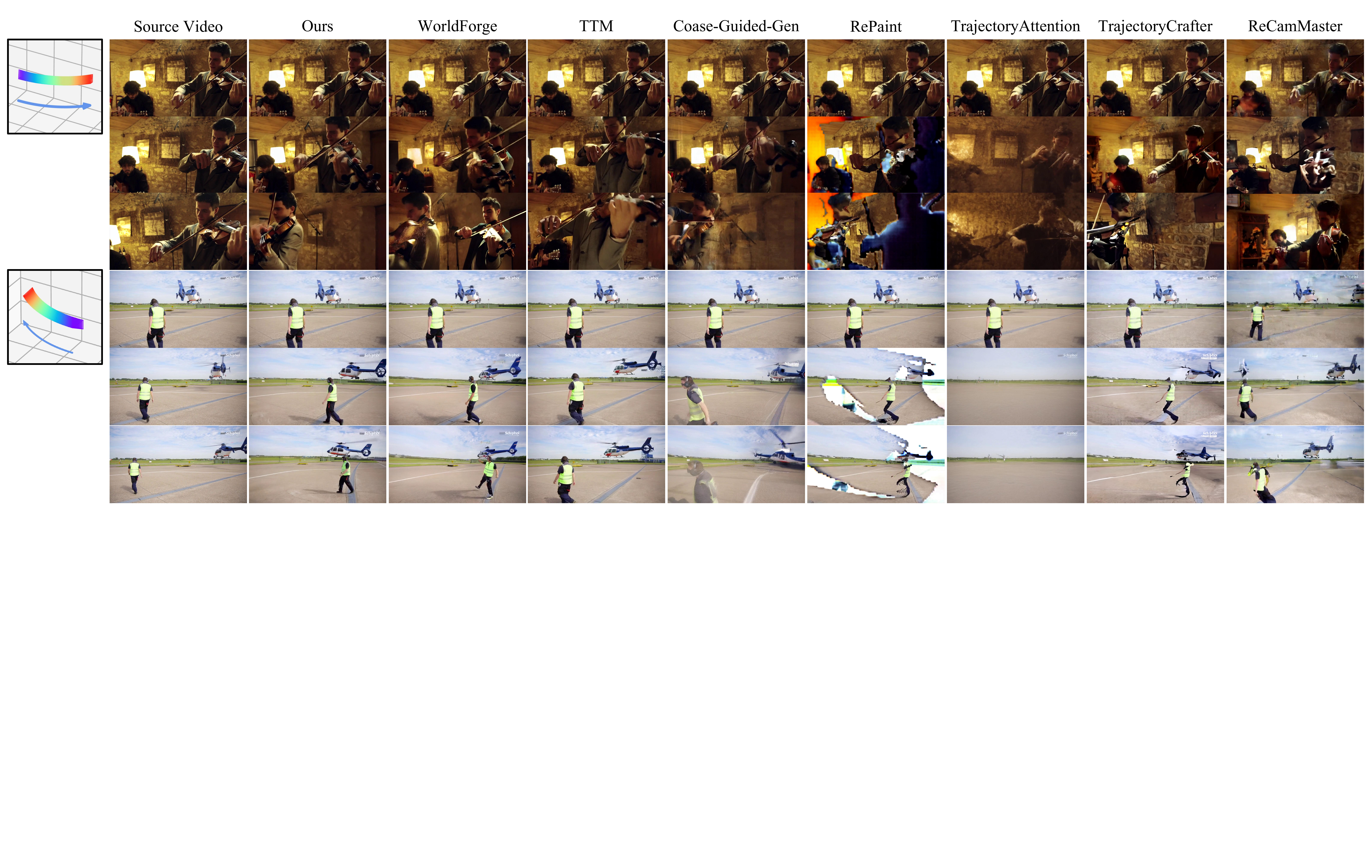}
  \caption{Qualitative results on DAVIS. On dynamic scenes,
  $h$-control achieves accurate camera control and good
  visual quality simultaneously, while baselines trade one for the other.}
  \label{fig:davis-qualitative}
\end{figure}

\begin{table}[h]
\caption{Component ablation on \textbf{DAVIS}. \checkmark marks
active components; top row is the outer $h$-transform guidance alone,
bottom row the full sampler. Best in \textbf{bold}, second-best
\underline{underlined}.}
\centering
\footnotesize
\setlength{\tabcolsep}{5pt}
\renewcommand{\arraystretch}{0.9}
\begin{tabular}{ccc cccccc}
\toprule
\shortstack{Pseudo-Gibbs\\refinement} & \shortstack{Polyak\\readout} & \shortstack{Patch-freeze\\gate} & FVD$\downarrow$ & ATE\,(cm)$\downarrow$ & RRE\,(deg)$\downarrow$ & RTE\,(cm)$\downarrow$ & CLIP-f$\uparrow$ & CLIP-v$\uparrow$\\
\midrule
            &            &            & 392.13 & 7.9558 & 2.0128 & 4.4130 & 96.63 & 88.93 \\
\checkmark  &            &            & 382.26 & 7.6343 & 1.9554 & \underline{4.3351} & \underline{96.98} & \underline{89.71} \\
\checkmark  & \checkmark &            & \underline{375.67} & \underline{7.6051} & 2.0758 & 4.5340 & 96.91 & 89.49 \\
\checkmark  &            & \checkmark & 382.26 & 7.8734 & \textbf{1.8775} & 4.5266 & 96.96 & 89.62\\
\midrule
\checkmark  & \checkmark & \checkmark & \textbf{356.73} & \textbf{6.8832} & \underline{1.8907} & \textbf{4.2285} & \textbf{97.45} & \textbf{90.34} \\
\bottomrule
\end{tabular}
\label{tab:ablation-component}
\end{table}

\subsection{Analysis and Ablation Studies}
\label{sec:exp-ablation}
  
\textbf{Component ablation.}
Table~\ref{tab:ablation-component} isolates the three sampler
components introduced in Section~\ref{sec:method} ---
\emph{pseudo-Gibbs refinement} (Section~\ref{sec:gibbs-refinement}),
the \emph{Polyak-averaged readout}
(Eq.~\eqref{eq:m-writeback}), and the
\emph{$\Delta$-Welford patch-freeze gate}
(Section~\ref{sec:method-locality-gibbs}) --- by toggling each on
top of the outer $h$-transform step on DAVIS. Activating
\emph{pseudo-Gibbs refinement} alone (row 1 $\!\to\!$ 2) reduces FVD
from $392.13$ to $382.26$ and improves every other metric uniformly,
confirming that the perturb-and-redenoise primitive on the
unobserved support already provides a moderate uniform gain over
the outer $h$-transform step alone. Adding the \emph{Polyak-averaged readout}
(row 2 $\!\to\!$ 3) further drops FVD to $375.67$ with CLIP scores
essentially unchanged, consistent with the variance reduction
afforded by iterate averaging on the linear FlowMatch step; trajectory
errors drift slightly upward (RRE $2.08$, RTE $4.53$) because
averaging smooths the per-iterate signal that drives sharp pose
alignment. The \emph{$\Delta$-Welford patch-freeze gate}
(row 2 $\!\to\!$ 4) instead attains the best RRE in the table
($\mathbf{1.8775}$) while keeping CLIP scores stable; we
hypothesise that the gain stems from
fixing converged patches at their stabilised iterate before drift sets in on rotation-sensitive
regions. FVD, however, remains unchanged ($382.26$): the gate
protects already-mixed structure rather than improving
distributional fidelity on its own. The two enhancements are
therefore \emph{complementary}: combining all three (row 5) attains
the best score on five of six metrics and second-best on RRE, with
the freeze gate and the Polyak readout compensating for each other's
costs and yielding gains substantially larger than their individual
contributions.

\textbf{Block-conditional Gibbs mixing.}
Figure~\ref{fig:stable-mask} visualises the per-patch stability
mask $S_g$ across inner iterations on three frames. The top
two rows show $S_g$ at $j\!=\!4$ and $j\!=\!12$, and the bottom
row the corresponding warped frames. As $j$ grows, the stable
region (frozen patches) expands from a sparse seed and eventually
fills the unobserved support. This patch-by-patch convergence directly validates the latent locality
assumption: if patches were globally coupled, no patch could
stabilise before the rest, and the stability mask would never
exhibit a spatially structured growth pattern.

\begin{figure}[t]
  \centering
  \includegraphics[width=0.5\linewidth]{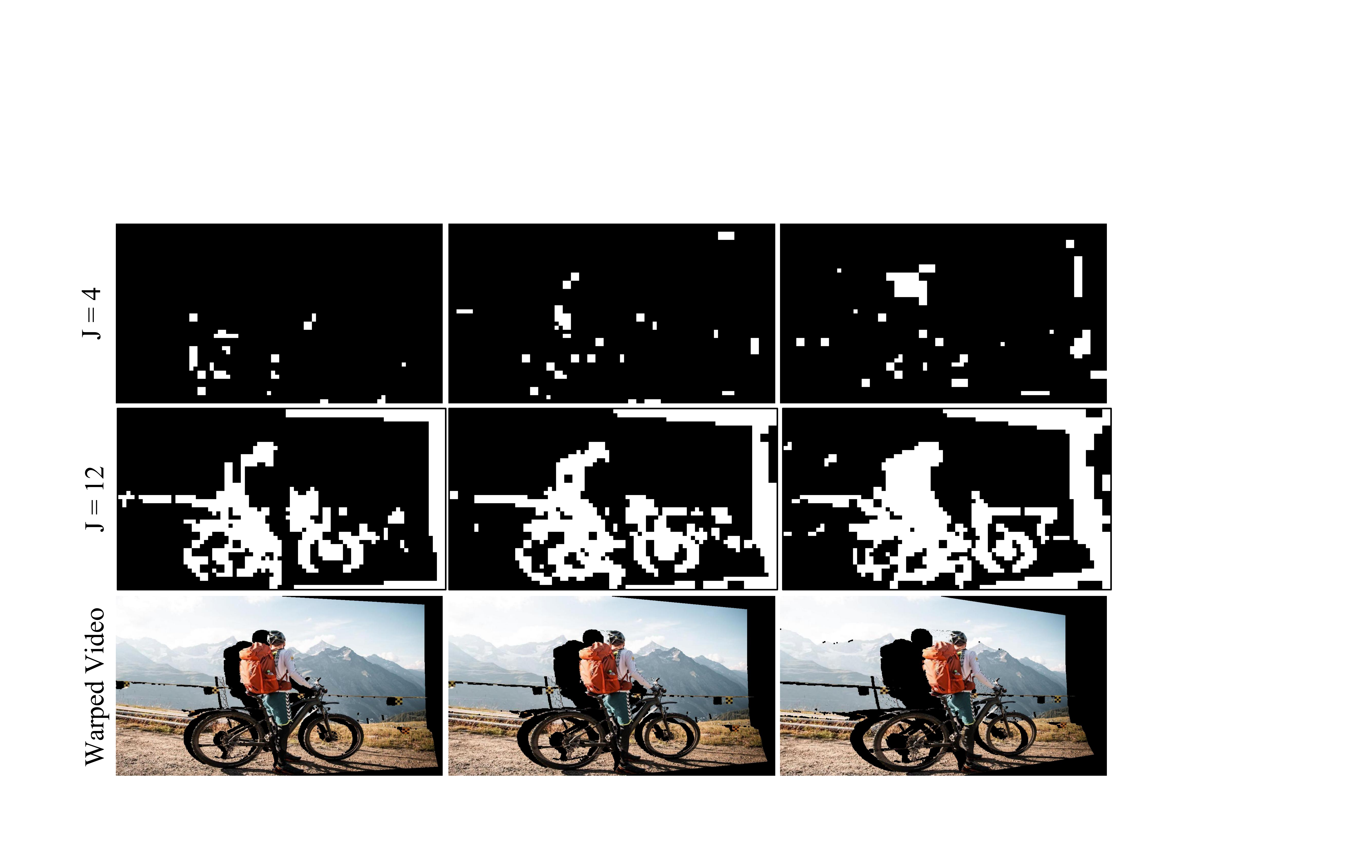}
  \caption{\textbf{Stability mask $S_g$ evolution} at the initial
  outer guidance step (noise level $\sigma_{t_s}$). Stable region
  grows with $j$ to cover the unobserved support.}
  \label{fig:stable-mask}
\end{figure}

\endgroup

\section{Related Works}
\label{sec:related}

\textbf{Camera-Controllable Video Generation.}
Existing camera-control methods can be broadly divided into
\emph{training-based} and \emph{training-free} paradigms. Training-based
methods inject camera trajectories, camera-aware representations, or
multi-view synchronization modules into the generator and learn the
mapping from control signals to motion through additional
supervision~\citep{he2024cameractrl,he2025cameractrl,bahmani2024vd3d,bahmani2025ac3d,wang2024motionctrl,bai2024syncammaster,bai2025recammaster,van2024generative};
they achieve strong controllability but rely on model-specific
retraining and suitable camera-annotated or multi-view data. By
contrast, training-free methods construct geometric guidance at
inference time, e.g., via point-cloud or depth-based warping, latent
reframing, or explicit view-conditioned rendering
cues~\citep{hou2024training,zhou2025latent,zhang2025recapture,park2025zero4d},
with related trajectory-editing and novel-view synthesis systems
combining explicit geometry with generative models for monocular video
re-rendering or sparse-view
synthesis~\citep{yu2024viewcrafter,yu2025trajectorycrafter,seo2025vid}.
Our method belongs to the training-free family but differs in how the
warped signal is used: instead of treating warped content as a
replacement target, we interpret it as \emph{partial and spatially
non-uniform evidence} and refine the unobserved complement through a
block-conditional pseudo-Gibbs sampler equipped with an adaptive
$\Delta$-Welford patch-freeze gate.

\textbf{Training-Free Conditional Sampling.}
Diffusion Posterior Sampling (DPS)~\citep{chung2022diffusion}
instantiates Doob's $h$-transform under a Gaussian observation
likelihood via a Tweedie surrogate; the noise-aware
weighted-$h$-transform of~\citet{wang2026coarse} extends DPS with a
global-scalar confidence weight,
and~\citet{zhu2026training} pursues the same formal object via
fine-tuning rather than inference-time guidance.
Inference-time conditional samplers such as the Twisted Diffusion
Sampler and Feynman--Kac
steering~\citep{wu2023practical,singhal2025general} steer pretrained
models without retraining, and MPGD~\citep{he2023manifold} shows that
aggressive guidance can push trajectories off the data manifold. Our
inner refinement extends Bengio's generalised denoising
auto-encoder~\citep{bengio2013generalized} to the partial-observation
conditional setting; SRVS~\citep{jang2026self} uses the same primitive
unconditionally on the full latent, and plug-and-play /
restoration-prior
methods~\citep{sreehari2016plug,romano2017little,hu2023restoration,terris2025fire}
share the spirit of using a pretrained denoiser as an implicit prior.
Detailed positioning and SRVS distinctions are in Appendix~\ref{sec:appendix-srvs-diffs}.

\section{Conclusion and Discussion}
\label{sec:conclusion}

$h$-control is a training-free posterior sampler for pretrained
flow-matching video generators that recasts camera control as
conditional posterior sampling under a partial Gaussian observation
likelihood. Its technical contribution is a
\emph{block-conditional pseudo-Gibbs} refinement on the unobserved
sub-state. We accelerate convergence on high-dimensional video latents
through a locality-justified patch partition, made adaptive by a
$\Delta$-Welford patch-freeze gate that holds each patch fixed
once its inner iterates have converged. Across RealEstate10K and DAVIS, $h$-control attains the
best FVD against all seven training-free and training-based
competitors and improves over every training-free baseline on
every metric, while remaining competitive with
training-based controllers without fine-tuning.

\textbf{Limitations.}
Two caveats are worth noting. First, the order-$2$ Markov
property used to size the block-conditional patches is calibrated
to the Wan~2.2 latent
(Appendix~\ref{sec:appendix-locality-validation}). Latent locality
itself is a generic property of VAE-encoded video models, but the precise dependence order must be
re-measured per backbone before $h$-control is applied. Second, the outer step
inherits the geometric accuracy of the depth-and-reproject warping
pipeline --- severe depth errors degrade the conditioning
likelihood, a constraint shared by every training-free method that
consumes warped guidance.

\textbf{Future directions.} Applying $h$-control to other partial-observation video inverse problems such as video inpainting, super-resolution and masked novel-view synthesis is a promising direction, since these tasks share the same partial-evidence structure and could be tackled by the same
training-free sampler. 
Another open challenge is to accelerate
the refinement process: the global attention still
incurs a full forward pass over every token regardless of which patches are frozen, so sparse-attention or KV-caching variants of DiT that skip frozen tokens, or distilled re-denoising surrogates,
are needed to translate the freeze gate's adaptivity into
wall-clock speed-ups.

\bibliographystyle{unsrtnat}
\bibliography{main}

@article{bai2024syncammaster,
  title={Syncammaster: Synchronizing multi-camera video generation from diverse viewpoints},
  author={Bai, Jianhong and Xia, Menghan and Wang, Xintao and Yuan, Ziyang and Fu, Xiao and Liu, Zuozhu and Hu, Haoji and Wan, Pengfei and Zhang, Di},
  journal={arXiv preprint arXiv:2412.07760},
  year={2024}
}

@inproceedings{bai2025recammaster,
  title={Recammaster: Camera-controlled generative rendering from a single video},
  author={Bai, Jianhong and Xia, Menghan and Fu, Xiao and Wang, Xintao and Mu, Lianrui and Cao, Jinwen and Liu, Zuozhu and Hu, Haoji and Bai, Xiang and Wan, Pengfei and others},
  booktitle={Proceedings of the IEEE/CVF International Conference on Computer Vision},
  pages={14834--14844},
  year={2025}
}

@inproceedings{zhang2025recapture,
  title={Recapture: Generative video camera controls for user-provided videos using masked video fine-tuning},
  author={Zhang, David Junhao and Paiss, Roni and Zada, Shiran and Karnad, Nikhil and Jacobs, David E and Pritch, Yael and Mosseri, Inbar and Shou, Mike Zheng and Wadhwa, Neal and Ruiz, Nataniel},
  booktitle={Proceedings of the IEEE/CVF Conference on Computer Vision and Pattern Recognition},
  pages={2050--2062},
  year={2025}
}

@inproceedings{van2024generative,
  title={Generative camera dolly: Extreme monocular dynamic novel view synthesis},
  author={Van Hoorick, Basile and Wu, Rundi and Ozguroglu, Ege and Sargent, Kyle and Liu, Ruoshi and Tokmakov, Pavel and Dave, Achal and Zheng, Changxi and Vondrick, Carl},
  booktitle={European Conference on Computer Vision},
  pages={313--331},
  year={2024},
  organization={Springer}
}

@inproceedings{yu2025trajectorycrafter,
  title={Trajectorycrafter: Redirecting camera trajectory for monocular videos via diffusion models},
  author={Yu, Mark and Hu, Wenbo and Xing, Jinbo and Shan, Ying},
  booktitle={Proceedings of the IEEE/CVF international conference on computer vision},
  pages={100--111},
  year={2025}
}

@article{xiao2024trajectory,
  title={Trajectory attention for fine-grained video motion control},
  author={Xiao, Zeqi and Ouyang, Wenqi and Zhou, Yifan and Yang, Shuai and Yang, Lei and Si, Jianlou and Pan, Xingang},
  journal={arXiv preprint arXiv:2411.19324},
  year={2024}
}

@article{he2024cameractrl,
  title={Cameractrl: Enabling camera control for text-to-video generation},
  author={He, Hao and Xu, Yinghao and Guo, Yuwei and Wetzstein, Gordon and Dai, Bo and Li, Hongsheng and Yang, Ceyuan},
  journal={arXiv preprint arXiv:2404.02101},
  year={2024}
}

@inproceedings{he2025cameractrl,
  title={Cameractrl ii: Dynamic scene exploration via camera-controlled video diffusion models},
  author={He, Hao and Yang, Ceyuan and Lin, Shanchuan and Xu, Yinghao and Wei, Meng and Gui, Liangke and Zhao, Qi and Wetzstein, Gordon and Jiang, Lu and Li, Hongsheng},
  booktitle={Proceedings of the IEEE/CVF International Conference on Computer Vision},
  pages={13416--13426},
  year={2025}
}

@article{kuang2024collaborative,
  title={Collaborative video diffusion: Consistent multi-video generation with camera control},
  author={Kuang, Zhengfei and Cai, Shengqu and He, Hao and Xu, Yinghao and Li, Hongsheng and Guibas, Leonidas J and Wetzstein, Gordon},
  journal={Advances in Neural Information Processing Systems},
  volume={37},
  pages={16240--16271},
  year={2024}
}

@article{bahmani2024vd3d,
  title={Vd3d: Taming large video diffusion transformers for 3d camera control},
  author={Bahmani, Sherwin and Skorokhodov, Ivan and Siarohin, Aliaksandr and Menapace, Willi and Qian, Guocheng and Vasilkovsky, Michael and Lee, Hsin-Ying and Wang, Chaoyang and Zou, Jiaxu and Tagliasacchi, Andrea and others},
  journal={arXiv preprint arXiv:2407.12781},
  year={2024}
}

@inproceedings{bahmani2025ac3d,
  title={Ac3d: Analyzing and improving 3d camera control in video diffusion transformers},
  author={Bahmani, Sherwin and Skorokhodov, Ivan and Qian, Guocheng and Siarohin, Aliaksandr and Menapace, Willi and Tagliasacchi, Andrea and Lindell, David B and Tulyakov, Sergey},
  booktitle={Proceedings of the Computer Vision and Pattern Recognition Conference},
  pages={22875--22889},
  year={2025}
}

@inproceedings{wang2024motionctrl,
  title={Motionctrl: A unified and flexible motion controller for video generation},
  author={Wang, Zhouxia and Yuan, Ziyang and Wang, Xintao and Li, Yaowei and Chen, Tianshui and Xia, Menghan and Luo, Ping and Shan, Ying},
  booktitle={ACM SIGGRAPH 2024 Conference Papers},
  pages={1--11},
  year={2024}
}

@article{lipman2022flow,
  title={Flow matching for generative modeling},
  author={Lipman, Yaron and Chen, Ricky TQ and Ben-Hamu, Heli and Nickel, Maximilian and Le, Matt},
  journal={arXiv preprint arXiv:2210.02747},
  year={2022}
}

@book{rogers2000diffusions,
  title={Diffusions, Markov processes, and martingales},
  author={Rogers, L Chris G and Williams, David},
  volume={2},
  year={2000},
  publisher={Cambridge university press}
}

@article{zhu2026training,
  title={Training-Free Adaptation of Diffusion Models via Doob's $ h $-Transform},
  author={Zhu, Qijie and Ye, Zeqi and Liu, Han and Wang, Zhaoran and Chen, Minshuo},
  journal={arXiv preprint arXiv:2602.16198},
  year={2026}
}

@article{wang2026coarse,
  title={Coarse-Guided Visual Generation via Weighted h-Transform Sampling},
  author={Wang, Yanghao and Jiang, Ziqi and Wang, Zhen and Chen, Long},
  journal={arXiv preprint arXiv:2603.12057},
  year={2026}
}

@article{jang2026self,
  title={Self-Refining Video Sampling},
  author={Jang, Sangwon and Ki, Taekyung and Jo, Jaehyeong and Xie, Saining and Yoon, Jaehong and Hwang, Sung Ju},
  journal={arXiv preprint arXiv:2601.18577},
  year={2026}
}

@article{chung2022diffusion,
  title={Diffusion posterior sampling for general noisy inverse problems},
  author={Chung, Hyungjin and Kim, Jeongsol and Mccann, Michael T and Klasky, Marc L and Ye, Jong Chul},
  journal={arXiv preprint arXiv:2209.14687},
  year={2022}
}

@article{wu2023practical,
  title={Practical and asymptotically exact conditional sampling in diffusion models},
  author={Wu, Luhuan and Trippe, Brian and Naesseth, Christian and Blei, David and Cunningham, John P},
  journal={Advances in Neural Information Processing Systems},
  volume={36},
  pages={31372--31403},
  year={2023}
}

@article{he2023manifold,
  title={Manifold preserving guided diffusion},
  author={He, Yutong and Murata, Naoki and Lai, Chieh-Hsin and Takida, Yuhta and Uesaka, Toshimitsu and Kim, Dongjun and Liao, Wei-Hsiang and Mitsufuji, Yuki and Kolter, J Zico and Salakhutdinov, Ruslan and others},
  journal={arXiv preprint arXiv:2311.16424},
  year={2023}
}

@article{singhal2025general,
  title={A general framework for inference-time scaling and steering of diffusion models},
  author={Singhal, Raghav and Horvitz, Zachary and Teehan, Ryan and Ren, Mengye and Yu, Zhou and McKeown, Kathleen and Ranganath, Rajesh},
  journal={arXiv preprint arXiv:2501.06848},
  year={2025}
}

@article{hou2024training,
  title={Training-free camera control for video generation},
  author={Hou, Chen and Chen, Zhibo},
  journal={arXiv preprint arXiv:2406.10126},
  year={2024}
}

@inproceedings{zhou2025latent,
  title={Latent-reframe: Enabling camera control for video diffusion models without training},
  author={Zhou, Zhenghong and An, Jie and Luo, Jiebo},
  booktitle={Proceedings of the IEEE/CVF International Conference on Computer Vision},
  pages={12779--12789},
  year={2025}
}

@article{park2025zero4d,
  title={Zero4D: Training-Free 4D Video Generation From Single Video Using Off-the-Shelf Video Diffusion},
  author={Park, Jangho and Kwon, Taesung and Ye, Jong Chul},
  journal={arXiv preprint arXiv:2503.22622},
  year={2025}
}

@article{yu2024viewcrafter,
  title={Viewcrafter: Taming video diffusion models for high-fidelity novel view synthesis},
  author={Yu, Wangbo and Xing, Jinbo and Yuan, Li and Hu, Wenbo and Li, Xiaoyu and Huang, Zhipeng and Gao, Xiangjun and Wong, Tien-Tsin and Shan, Ying and Tian, Yonghong},
  journal={arXiv preprint arXiv:2409.02048},
  year={2024}
}

@article{seo2025vid,
  title={Vid-camedit: Video camera trajectory editing with generative rendering from estimated geometry},
  author={Seo, Junyoung and Han, Jisang and Jung, Jaewoo and Jin, Siyoon and Lee, Joungbin and Narihira, Takuya and Fukuda, Kazumi and Shibuya, Takashi and Ahn, Donghoon and Hu, Shoukang and others},
  journal={arXiv preprint arXiv:2506.13697},
  year={2025}
}

@article{sreehari2016plug,
  title={Plug-and-play priors for bright field electron tomography and sparse interpolation},
  author={Sreehari, Suhas and Venkatakrishnan, S Venkat and Wohlberg, Brendt and Buzzard, Gregery T and Drummy, Lawrence F and Simmons, Jeffrey P and Bouman, Charles A},
  journal={IEEE Transactions on Computational Imaging},
  volume={2},
  number={4},
  pages={408--423},
  year={2016},
  publisher={IEEE}
}

@article{romano2017little,
  title={The little engine that could: Regularization by denoising (RED)},
  author={Romano, Yaniv and Elad, Michael and Milanfar, Peyman},
  journal={SIAM journal on imaging sciences},
  volume={10},
  number={4},
  pages={1804--1844},
  year={2017},
  publisher={SIAM}
}

@inproceedings{hu2023restoration,
  title={A restoration network as an implicit prior},
  author={Hu, Yuyang and Delbracio, Mauricio and Milanfar, Peyman and Kamilov, Ulugbek},
  booktitle={The Twelfth International Conference on Learning Representations},
  year={2023}
}

@inproceedings{terris2025fire,
  title={Fire: Fixed-points of restoration priors for solving inverse problems},
  author={Terris, Matthieu and Kamilov, Ulugbek S and Moreau, Thomas},
  booktitle={Proceedings of the Computer Vision and Pattern Recognition Conference},
  pages={23185--23194},
  year={2025}
}

@article{anderson1982reverse,
  title={Reverse-time diffusion equation models},
  author={Anderson, Brian DO},
  journal={Stochastic Processes and their Applications},
  volume={12},
  number={3},
  pages={313--326},
  year={1982},
  publisher={Elsevier}
}

@article{vincent2011connection,
  title={A connection between score matching and denoising autoencoders},
  author={Vincent, Pascal},
  journal={Neural computation},
  volume={23},
  number={7},
  pages={1661--1674},
  year={2011},
  publisher={MIT Press}
}

@article{liu2022flow,
  title={Flow straight and fast: Learning to generate and transfer data with rectified flow},
  author={Liu, Xingchao and Gong, Chengyue and Liu, Qiang},
  journal={arXiv preprint arXiv:2209.03003},
  year={2022}
}

@article{denker2024deft,
  title={DEFT: Efficient fine-tuning of diffusion models by learning the generalised $ h $-transform},
  author={Denker, Alexander and Vargas, Francisco and Padhy, Shreyas and Didi, Kieran and Mathis, Simon and Dutordoir, Vincent and Barbano, Riccardo and Mathieu, Emile and Komorowska, Urszula J and Lio, Pietro},
  journal={Advances in Neural Information Processing Systems},
  volume={37},
  pages={19636--19682},
  year={2024}
}

@article{bengio2013generalized,
  title={Generalized denoising auto-encoders as generative models},
  author={Bengio, Yoshua and Yao, Li and Alain, Guillaume and Vincent, Pascal},
  journal={Advances in neural information processing systems},
  volume={26},
  year={2013}
}

@inproceedings{lugmayr2022repaint,
  title={Repaint: Inpainting using denoising diffusion probabilistic models},
  author={Lugmayr, Andreas and Danelljan, Martin and Romero, Andres and Yu, Fisher and Timofte, Radu and Van Gool, Luc},
  booktitle={Proceedings of the IEEE/CVF conference on computer vision and pattern recognition},
  pages={11461--11471},
  year={2022}
}

@article{singer2025time,
  title={Time-to-Move: Training-Free Motion Controlled Video Generation via Dual-Clock Denoising},
  author={Singer, Assaf and Rotstein, Noam and Mann, Amir and Kimmel, Ron and Litany, Or},
  journal={arXiv preprint arXiv:2511.08633},
  year={2025}
}

@article{song2025worldforge,
  title={Taming Video Models for 3D and 4D Generation via Zero-Shot Camera Control},
  author={Song, Chenxi and Yang, Yanming and Zhao, Tong and Li, Ruibo and Zhang, Chi},
  journal={arXiv preprint arXiv:2509.15130},
  year={2025}
}

@inproceedings{rombach2022high,
  title={High-resolution image synthesis with latent diffusion models},
  author={Rombach, Robin and Blattmann, Andreas and Lorenz, Dominik and Esser, Patrick and Ommer, Bj{\"o}rn},
  booktitle={Proceedings of the IEEE/CVF conference on computer vision and pattern recognition},
  pages={10684--10695},
  year={2022}
}

@inproceedings{esser2021taming,
  title={Taming transformers for high-resolution image synthesis},
  author={Esser, Patrick and Rombach, Robin and Ommer, Bjorn},
  booktitle={Proceedings of the IEEE/CVF conference on computer vision and pattern recognition},
  pages={12873--12883},
  year={2021}
}

@article{liu1994covariance,
  title={Covariance structure of the Gibbs sampler with applications to the comparisons of estimators and augmentation schemes},
  author={Liu, Jun S and Wong, Wing Hung and Kong, Augustine},
  journal={Biometrika},
  pages={27--40},
  year={1994},
  publisher={JSTOR}
}

@article{roberts1997updating,
  title={Updating schemes, correlation structure, blocking and parameterization for the Gibbs sampler},
  author={Roberts, Gareth O and Sahu, Sujit K},
  journal={Journal of the Royal Statistical Society Series B: Statistical Methodology},
  volume={59},
  number={2},
  pages={291--317},
  year={1997},
  publisher={Oxford University Press}
}

@article{ye2024tfg,
  title={Tfg: Unified training-free guidance for diffusion models},
  author={Ye, Haotian and Lin, Haowei and Han, Jiaqi and Xu, Minkai and Liu, Sheng and Liang, Yitao and Ma, Jianzhu and Zou, James and Ermon, Stefano},
  journal={Advances in Neural Information Processing Systems},
  volume={37},
  pages={22370--22417},
  year={2024}
}

@article{polyak1992acceleration,
  title={Acceleration of stochastic approximation by averaging},
  author={Polyak, Boris T and Juditsky, Anatoli B},
  journal={SIAM journal on control and optimization},
  volume={30},
  number={4},
  pages={838--855},
  year={1992},
  publisher={SIAM}
}

@article{zhou2018stereo,
  title={Stereo magnification: Learning view synthesis using multiplane images},
  author={Zhou, Tinghui and Tucker, Richard and Flynn, John and Fyffe, Graham and Snavely, Noah},
  journal={arXiv preprint arXiv:1805.09817},
  year={2018}
}

@article{unterthiner2018towards,
  title={Towards accurate generative models of video: A new metric \& challenges},
  author={Unterthiner, Thomas and Van Steenkiste, Sjoerd and Kurach, Karol and Marinier, Raphael and Michalski, Marcin and Gelly, Sylvain},
  journal={arXiv preprint arXiv:1812.01717},
  year={2018}
}

@inproceedings{li2025megasam,
  title={Megasam: Accurate, fast and robust structure and motion from casual dynamic videos},
  author={Li, Zhengqi and Tucker, Richard and Cole, Forrester and Wang, Qianqian and Jin, Linyi and Ye, Vickie and Kanazawa, Angjoo and Holynski, Aleksander and Snavely, Noah},
  booktitle={Proceedings of the IEEE/CVF Conference on Computer Vision and Pattern Recognition},
  pages={10486--10496},
  year={2025}
}

@article{wan2025wan,
  title={Wan: Open and advanced large-scale video generative models},
  author={Wan, Team and Wang, Ang and Ai, Baole and Wen, Bin and Mao, Chaojie and Xie, Chen-Wei and Chen, Di and Yu, Feiwu and Zhao, Haiming and Yang, Jianxiao and others},
  journal={arXiv preprint arXiv:2503.20314},
  year={2025}
}

@article{welford1962note,
  title={Note on a method for calculating corrected sums of squares and products},
  author={Welford, Barry Payne},
  journal={Technometrics},
  volume={4},
  number={3},
  pages={419--420},
  year={1962},
  publisher={Taylor \& Francis}
}

@article{luo2016understanding,
  title={Understanding the effective receptive field in deep convolutional neural networks},
  author={Luo, Wenjie and Li, Yujia and Urtasun, Raquel and Zemel, Richard},
  journal={Advances in neural information processing systems},
  volume={29},
  year={2016}
}

@article{kynkaanniemi2024applying,
  title={Applying guidance in a limited interval improves sample and distribution quality in diffusion models},
  author={Kynk{\"a}{\"a}nniemi, Tuomas and Aittala, Miika and Karras, Tero and Laine, Samuli and Aila, Timo and Lehtinen, Jaakko},
  journal={Advances in Neural Information Processing Systems},
  volume={37},
  pages={122458--122483},
  year={2024}
}

@inproceedings{radford2021learning,
  title={Learning transferable visual models from natural language supervision},
  author={Radford, Alec and Kim, Jong Wook and Hallacy, Chris and Ramesh, Aditya and Goh, Gabriel and Agarwal, Sandhini and Sastry, Girish and Askell, Amanda and Mishkin, Pamela and Clark, Jack and others},
  booktitle={International conference on machine learning},
  pages={8748--8763},
  year={2021},
  organization={PMLR}
}

@inproceedings{lu2023contrastive,
  title={Contrastive energy prediction for exact energy-guided diffusion sampling in offline reinforcement learning},
  author={Lu, Cheng and Chen, Huayu and Chen, Jianfei and Su, Hang and Li, Chongxuan and Zhu, Jun},
  booktitle={International Conference on Machine Learning},
  pages={22825--22855},
  year={2023},
  organization={PMLR}
}

@inproceedings{song2021scorebased,
  title={Score-Based Generative Modeling through Stochastic Differential Equations},
  author={Yang Song and Jascha Sohl-Dickstein and Diederik P Kingma and Abhishek Kumar and Stefano Ermon and Ben Poole},
  booktitle={The Ninth International Conference on Learning Representations},
  year={2021}
}

@inproceedings{zhang2018lpips,
  title={The Unreasonable Effectiveness of Deep Features as a Perceptual Metric},
  author={Zhang, Richard and Isola, Phillip and Efros, Alexei A. and Shechtman, Eli and Wang, Oliver},
  booktitle={Proceedings of the IEEE conference on computer vision and pattern recognition},
  pages={586--595},
  year={2018}
}

@article{wang2004ssim,
  title={Image quality assessment: from error visibility to structural similarity},
  author={Wang, Zhou and Bovik, Alan C. and Sheikh, Hamid R. and Simoncelli, Eero P.},
  journal={IEEE transactions on image processing},
  volume={13},
  number={4},
  pages={600--612},
  year={2004},
  publisher={IEEE}
}


\newpage
\appendix
\section{Extended Related Work and Positioning}
\label{sec:appendix-related}

This appendix expands Section~\ref{sec:related} with a technical positioning
of $h$-control against the four research lines it sits between: camera-controllable
video generation, training-free conditional sampling on pretrained
diffusion/flow models, plug-and-play denoiser-as-prior methods, and generalized
denoising auto-encoders.

\subsection{Camera-controllable video generation}
\label{sec:appendix-related-camera}

\paragraph{Training-based controllers.}
CameraCtrl~\citep{he2024cameractrl} and
CameraCtrl~II~\citep{he2025cameractrl} inject Pl\"ucker-coordinate camera
embeddings into a CogVideoX-style backbone.
VD3D~\citep{bahmani2024vd3d} and AC3D~\citep{bahmani2025ac3d} train video DiTs
with explicit 3D-aware tokenization. MotionCtrl~\citep{wang2024motionctrl}
trains a separate motion module conditioned on camera and object motion
fields. Multi-view extensions: SyncCammaster~\citep{bai2024syncammaster}
synchronizes generation across views, and
ReCamMaster~\citep{bai2025recammaster} learns to re-render an input video
under a new trajectory by joint training on synchronized multi-camera data.
Several systems build a 4D scaffold from a source video and guide the
generator with it: TrajectoryAttention~\citep{xiao2024trajectory} introduces
trajectory-aware attention; TrajectoryCrafter~\citep{yu2025trajectorycrafter}
renders explicit point clouds from monocular input and trains a refinement
diffusion model on top. ViewCrafter~\citep{yu2024viewcrafter} and
Vid-camedit~\citep{seo2025vid} solve sparse-view novel-view synthesis by
combining explicit 3D priors with generative refinement;
\citet{van2024generative} provides a generative-modeling perspective on the
same problem. The common requirement across this family is that at least one
component --- backbone, adapter, or refinement head --- is fine-tuned to
internalize the trajectory-to-video correspondence.

\paragraph{Training-free controllers.}
TTM~\citep{singer2025time} and
WorldForge~\citep{song2025worldforge} construct a warped guidance video by
lifting the source to a point cloud through monocular depth and reprojecting
under the target camera, then run a pretrained generator with hard latent
replacement on the visibility mask at every denoising step.
Recapture~\citep{zhang2025recapture} and Latent
Reframing~\citep{zhou2025latent} use a similar warp-and-replace primitive with
mask construction tuned for static scenes. Zero4D~\citep{park2025zero4d}
extends this to free-camera 4D generation in a zero-shot fashion, again via
masked replacement. Coarse-Guided~\citep{wang2026coarse} drops the partial
mask and instead applies a global noise-aware soft pull toward the warped
reference. Both ends of this spectrum are degenerate boundary limits of
$h$-control: at $\sigma_y\!\to\!0$ the warp likelihood concentrates on
$\{z_0:M\!\odot\!z_0=M\!\odot\!\tilde z_0\}$ and the write-back collapses
to hard replacement; at $M\!\equiv\!1$ the partial mask becomes the whole
grid and the soft pull reduces to Coarse-Guided. These are sanity-check
consistency properties, not the regime where $h$-control operates. The novel object is the interior: at finite
$\sigma_y$ on a non-trivial partial mask, the inner block-conditional
Gibbs refinement targets the partial-observation conditional posterior
$p(z_0|_{1-M}\!\mid\!\bar z_t|_M)$
(Proposition~\ref{prop:gibbs-stationary}), which has no counterpart in
either baseline and is what delivers the gains in
Section~\ref{sec:exp-sota}.

\subsection{Training-free conditional sampling}
\label{sec:appendix-related-condsampling}

\paragraph{First-order surrogates: DPS family.}
Diffusion Posterior Sampling~\citep{chung2022diffusion} replaces the
intractable $h$-induced drift with the gradient of the observation likelihood
evaluated at the Tweedie mean $\hat z_0$. A line of follow-ups
substitutes alternative point estimates for $\hat z_0$ or alternative
likelihoods (FreeDoM, $\Pi$GDM, MPGD~\citep{he2023manifold}, TFG-UGD), all
sharing the property that the surrogate is a single point evaluation of
$\nabla\log h$. \emph{Position of $h$-control:} the outer step is exactly a
DPS-style surrogate \emph{restricted to the partial mask $M$}; the structural
addition is the inner block-conditional Gibbs refinement on $1\!-\!M$.

\paragraph{Noise-aware weighted surrogates.}
The DPS surrogate is most reliable at high $\sigma_t$ and degrades as
$\sigma_t\!\to\!0$, where $\hat z_0$ becomes deterministic. Weighted
$h$-transform sampling~\citep{wang2026coarse} introduces a noise-level-aware
scalar $\lambda_{\sigma_t}$ that down-weights the surrogate at low noise.
DEFT~\citep{denker2024deft} amortizes the surrogate by training a small
network that predicts the $h$-induced drift directly, and \citet{zhu2026training}
fine-tunes the base model to internalize the same drift. \emph{Position of
$h$-control:} we extend the global scalar $\lambda_{\sigma_t}$ to a
\emph{spatially non-uniform} mask $M$ and pair it with a novel inner refinement on
the unobserved support, while keeping the conditioning entirely at inference
time --- no extra network and no fine-tuning.

\paragraph{Sequential Monte Carlo and Feynman--Kac.}
Twisted Diffusion Samplers~\citep{wu2023practical} and Feynman--Kac
steering~\citep{singhal2025general} maintain a population of trajectories and
reweight them by the running terminal compatibility, recovering an unbiased
estimator of $\nabla\log h_t$ in the large-particle limit. The price is
$N\!\times$ inference compute per step. \emph{Position of $h$-control:} the
inner Gibbs chain is a single-trajectory variance-reduction primitive
operating on the same physical sample, achieving similar conditional
adherence at a small per-step multiplier rather than $N\!\times$.

\subsection{Plug-and-play and denoiser-as-prior methods}
\label{sec:appendix-related-pnp}

PnP-ADMM~\citep{sreehari2016plug}, RED~\citep{romano2017little}, and
score-based extensions~\citep{hu2023restoration,terris2025fire} share the
``denoiser as implicit prior'' perspective: a pretrained denoiser is
invoked at a single fixed noise level inside a half-quadratic or proximal
solver, acting as a projection-onto-prior step against a hard
data-fidelity constraint. $h$-control reuses the same intuition but in a
structurally different setting --- a multi-step flow-matching sampler with
a time-varying schedule, where the implicit prior is queried at every
$\sigma_t$ and conditioned on a partial observation, so the inner chain
targets $p(z_0|_{1-M}\!\mid\!\bar z_t|_M)$ rather than the unconditional
data law that PnP targets.

\section{Flow matching and Doob's $h$-transform background}
\label{sec:appendix-doob-background}

This appendix expands the brief summary in Section~\ref{sec:prelim} into a
self-contained derivation, starting from score-based diffusion as a forward
SDE, deriving the velocity--score identity that underwrites flow matching,
and arriving at the controlled flow-matching velocity used in
Eq.~\eqref{eq:m-vctrl} of the method section. Readers fluent in
score-based diffusion and Doob's $h$-transform can skip directly to
Section~\ref{sec:appendix-surrogates}.

\subsection{Score-based diffusion via SDEs}
\label{sec:appendix-sde}

Score-based generative models~\citep{song2021scorebased} cast sampling as
the time reversal of a noising SDE that transports data $z_0\!\sim\!p_0$
to a tractable prior $p_T\!=\!\mathcal N(0,I)$:
\begin{equation}
\label{eq:forward-sde}
    dz_t \;=\; f(z_t,t)\,dt \;+\; g(t)\,dw_t, \qquad t\in[0,T],
\end{equation}
with drift $f$, diffusion coefficient $g$, and Wiener process $w_t$. Its
time-reversed dynamics~\citep{anderson1982reverse}, integrated from
$z_T\!\sim\!p_T$ back to $t\!=\!0$, recover samples from $p_0$:
\begin{equation}
\label{eq:reverse-sde}
    dz_t \;=\; \bigl[f(z_t,t)-g(t)^2\,\nabla_{z_t}\log p_t(z_t)\bigr]\,dt
    \;+\; g(t)\,d\bar w_t,
\end{equation}
where $p_t$ is the marginal of $z_t$ and $\bar w_t$ a reverse-time Wiener
process. The score $\nabla_{z_t}\log p_t$ is learned as $s_\theta(z_t,t)$
via denoising score matching~\citep{vincent2011connection}. The same
marginals $\{p_t\}$ can be induced by the deterministic probability-flow
ODE~\citep{song2021scorebased},
\begin{equation}
\label{eq:pf-ode}
    dz_t \;=\; \bigl[f(z_t,t)-\tfrac12\,g(t)^2\,\nabla_{z_t}\log p_t(z_t)\bigr]\,dt.
\end{equation}

\subsection{Flow matching as reparameterization, and the velocity--score identity}
\label{sec:appendix-vel-score}

Flow matching~\citep{lipman2022flow,liu2022flow} is equivalent to a
reparameterization of the probability-flow ODE Eq.~\eqref{eq:pf-ode}.
Choosing the variance-exploding coefficients~\citep{song2021scorebased}
$f(z_t,t)\!=\!(\dot\sigma_t/\sigma_t)\,z_t$ and
$g(t)\!=\!\sqrt{2\,\dot\sigma_t\,\sigma_t}$ in
Eq.~\eqref{eq:forward-sde} produces marginals
$z_t\!=\!(1\!-\!\sigma_t)z_0+\sigma_t\epsilon$ with
$\epsilon\!\sim\!\mathcal N(0,I)$ along a monotone schedule
$\sigma_0\!=\!0,\sigma_1\!=\!1$. Substituting these into
Eq.~\eqref{eq:pf-ode} reduces it to the deterministic ODE
$dz_t/dt\!=\!u_\theta(z_t,t,c)$ through the \emph{velocity--score identity}
\begin{equation}
\label{eq:vel-score-identity}
    u_\theta(z_t,t,c)
    \;=\;
    \frac{\dot\sigma_t}{\sigma_t}\,z_t
    \;-\;
    \dot\sigma_t\,\sigma_t\,\nabla_{z_t}\log p_t(z_t\mid c),
\end{equation}
where $c$ is the external condition (e.g., text for T2V). Flow matching
trains $u_\theta$ to regress $\epsilon\!-\!z_0$; sampling integrates the
ODE backward from $z_1\!\sim\!\mathcal N(0,I)$ to $z_0$.
Eq.~\eqref{eq:vel-score-identity} is the load-bearing bridge in
$h$-control's derivation: every step that adds an $h$-induced drift to the
score in Eq.~\eqref{eq:h-conditioned-reverse-sde} can be rewritten as an
additive correction to $u_\theta$, giving the controlled velocity
Eq.~\eqref{eq:h-controlled-velocity}. The plug-in clean-prediction
estimator Eq.~\eqref{eq:fm-cleanpred} follows from
Eq.~\eqref{eq:vel-score-identity} by Tweedie's identity
$\hat z_0(z_t)\!=\!z_t+\sigma_t^2\nabla_{z_t}\log p_t(z_t)$ and the
forward parameterization above.

\subsection{Doob's $h$-transform for conditional sampling}
\label{sec:appendix-doob-h}

Many practical tasks impose an \emph{endpoint constraint} on the reverse
SDE rather than just a terminal distribution --- for instance, the noisy
inverse problem $y\!=\!\mathcal A(z_0)+n$ with known operator
$\mathcal A$, observation $y$, and noise $n$. Conditioning the marginals
$p_t$ in Eq.~\eqref{eq:reverse-sde} on such an event is intractable, since
the conditional dynamics couples through all future steps. Doob's
$h$-transform~\citep{rogers2000diffusions} resolves this by tilting the
unconditioned path measure with a non-negative terminal weight $h(z_0)$
--- e.g.\ the indicator $\mathbf 1_B(z_0)$ for hard conditioning, or the
likelihood $p(y\!\mid\!z_0)$ for soft conditioning on $y$ --- yielding a
Markov diffusion whose endpoint follows the desired conditional and whose
drift carries an additive correction.

Following~\citet{denker2024deft,wang2026coarse}, define the time-dependent
$h$-function as the conditional expectation of the terminal weight:
\begin{equation}
\label{eq:h-function}
    h_t(z_t)
    \;=\;
    \mathbb E\!\bigl[\,h(z_0)\,\bigm|\,z_t\bigr]
    \;=\;
    \int p_{0|t}(z_0\!\mid\!z_t)\,h(z_0)\,dz_0,
\end{equation}
where $p_{0|t}$ is the unconditional posterior of the clean endpoint given
the current state. Doob's theorem yields the conditioned reverse SDE
Eq.~\eqref{eq:h-conditioned-reverse-sde} with an additive
\emph{$h$-induced drift} $\nabla_{z_t}\log h_t(z_t)$, and via
Eq.~\eqref{eq:vel-score-identity} the controlled flow-matching velocity
Eq.~\eqref{eq:h-controlled-velocity}. For the noisy inverse problem with
$h(z_0)\!=\!p(y\!\mid\!z_0)$, Eq.~\eqref{eq:h-function} specializes to
$h_t(z_t)\!=\!p_t(y\!\mid\!z_t)$, recovering the standard
posterior-sampling identity
$\nabla_{z_t}\log p_t(z_t\!\mid\!y)\!=\!\nabla_{z_t}\log p_t(z_t)+\nabla_{z_t}\log p_t(y\!\mid\!z_t)$~\citep{denker2024deft}.

\subsection{Tractable surrogates for $\nabla\log h_t$}
\label{sec:appendix-surrogates}

The $h$-induced drift $\nabla_{z_t}\log h_t(z_t)$ in
Eq.~\eqref{eq:h-conditioned-reverse-sde} is intractable in general, since
evaluating the $h$-function Eq.~\eqref{eq:h-function} requires the
unconditional posterior $p_{0|t}$ of a large pretrained model and a
marginalization over future trajectories. Practical conditional sampling
therefore replaces $\nabla\log h_t$ with a tractable surrogate. The
useful taxonomic axis turns out to be \emph{whether the surrogate
backpropagates through the denoiser to compute its gradient}; we group
the established families along this axis and locate $h$-control among
them.

\paragraph{Common first-order point surrogate.}
All single-trajectory families below share the same starting point:
replace the intractable expectation in Eq.~\eqref{eq:h-function} by $h$
evaluated at the Tweedie posterior mean $\hat z_0(z_t)$
(Eq.~\eqref{eq:fm-cleanpred}),
\begin{equation}
\label{eq:dps-surrogate}
    h_t(z_t)\;\approx\;h\bigl(\hat z_0(z_t)\bigr)
    \;\Longrightarrow\;
    \nabla_{z_t}\log h_t(z_t)
    \;\approx\;
    \nabla_{z_t}\log h\bigl(\hat z_0(z_t)\bigr).
\end{equation}
The chain rule expands the right-hand side as
\begin{equation}
\label{eq:dps-chain}
    \nabla_{z_t}\log h\bigl(\hat z_0(z_t)\bigr)
    \;=\;
    \underbrace{\bigl(\nabla_{\hat z_0}\log h\bigr)\!\bigl(\hat z_0(z_t)\bigr)}_{\text{cheap, no autograd}}
    \,\cdot\,
    \underbrace{\nabla_{z_t}\hat z_0(z_t)}_{=\,I-\sigma_t\,\nabla_{z_t}u_\theta(z_t,t,c)},
\end{equation}
so a faithful evaluation requires the denoiser Jacobian
$\nabla_{z_t}u_\theta$. Whether this Jacobian is computed in full or
dropped is what distinguishes families (i) and (ii) below.

\paragraph{(i) Jacobian-aware first-order surrogates (gradient through the denoiser).}
This family computes Eq.~\eqref{eq:dps-chain} in full, backpropagating
through one or more denoiser forwards. Diffusion Posterior Sampling
(DPS)~\citep{chung2022diffusion} is the canonical instance: for a
Gaussian observation $h(z_0)\!=\!\mathcal N(y;\mathcal A(z_0),\tau^2 I)$
the surrogate becomes the gradient of a measurement-consistency loss
with respect to $z_t$, computed by autograd through $\hat z_0(z_t)$ and
hence through $u_\theta$. Manifold Preserving Guided Diffusion
(MPGD)~\citep{he2023manifold} additionally projects the
Jacobian-computed gradient onto the data-manifold tangent to prevent
off-manifold drift. Training-Free Guidance with universal-guidance
recurrence (TFG-UGD)~\citep{ye2024tfg} runs several inner mean-guidance
iterations per outer step, each backpropagating through the denoiser.
The defining feature of this family is the autograd-through-$u_\theta$
requirement: each surrogate evaluation costs at least one
denoiser-sized backward pass and the activations must be retained for
backprop, so the memory footprint scales with the model and resolution.
This is the price paid for using the full first-order Taylor expansion
of $\log h$ around $z_t$.

\paragraph{(ii) Jacobian-free first-order surrogates (stop-grad through the denoiser).}
This family uses the same point surrogate Eq.~\eqref{eq:dps-surrogate}
but drops the Jacobian in Eq.~\eqref{eq:dps-chain}: $\nabla_{z_t}\hat z_0
\!\approx\!I$, equivalent to detaching $\hat z_0$ from $z_t$ before
gradient computation. The surrogate then collapses to
$\nabla_{z_t}\log h(\hat z_0)\!\approx\!\nabla_{\hat z_0}\log h(\hat z_0)$,
which is cheap and requires no backprop through $u_\theta$. This
``stop-grad'' approximation is documented in DPS itself
(see~\citealt{chung2022diffusion}, App.~A) and is the standard choice
for inference-time guidance on large pretrained backbones where the
Jacobian-aware variant is computationally infeasible. Two strategies
within this family compensate for the loss of Jacobian information.
\emph{(a) Noise-aware weighting.} Weighted $h$-transform
sampling~\citep{wang2026coarse} introduces
$\lambda_{\sigma_t}\!=\!\sigma_t^{\alpha}$ on the surrogate,
\begin{equation}
\label{eq:weighted-h}
    \nabla_{z_t}\log h_t(z_t)
    \;\approx\;
    \lambda_{\sigma_t}\,\nabla_{\hat z_0}\log h\bigl(\hat z_0(z_t)\bigr),
\end{equation}
down-weighting at low $\sigma_t$ where the Jacobian carries the most
information and the stop-grad approximation degrades fastest. For
coarse-guided flow-matching with
$h(z_0)\!\propto\!\delta(z_0\!-\!\tilde y)$ replaced by a tractable
proxy $\tilde y$, this yields the closed-form controlled velocity
\begin{equation}
\label{eq:weighted-flow-velocity}
    u^{\text{ctrl}}_t
    \;=\;
    u_\theta
    \;+\;
    \lambda_{\sigma_t}\!\Bigl(\frac{z_t-\tilde y}{\sigma_t}-u_\theta\Bigr),
\end{equation}
interpolating between the pretrained velocity
($\lambda_{\sigma_t}\!\to\!0$) and a hard pull toward the warped
reference ($\lambda_{\sigma_t}\!\to\!1$). \emph{(b) Amortized
surrogate.} DEFT~\citep{denker2024deft} trains a small network to
predict $\nabla_{z_t}\log h_t$ directly, sidestepping the inference-time
Jacobian by amortizing it offline; \citet{zhu2026training} fine-tunes
the base model itself to internalize the same correction. Both incur an
extra training stage but, like the weighted variant, avoid autograd
through the pretrained denoiser at sampling time.

\paragraph{(iii) Particle-based surrogates (Sequential Monte Carlo / Feynman--Kac).}
Twisted Diffusion Samplers~\citep{wu2023practical} and Feynman--Kac
steering~\citep{singhal2025general} maintain a population of
trajectories and reweight them by the running terminal compatibility,
recovering an unbiased estimate of $\nabla\log h_t$ in the
large-particle limit at the cost of $N\!\times$ inference compute per
step. They are orthogonal to (i)/(ii) along the Jacobian axis: the bias
of the surrogate is reduced by particle resampling rather than by a
better single-point estimator, so neither the Jacobian nor a stop-grad
proxy enters the picture.

\subsection{Position of our work}
\label{sec:appendix-position}
$h$-control belongs to family (ii) (Jacobian-free first-order
surrogate): the surrogate is the Tweedie point estimate
Eq.~\eqref{eq:dps-surrogate} with the denoiser Jacobian
$\nabla_{z_t}u_\theta$ dropped. The Jacobian-free choice is mandatory
rather than optional in our setting --- the Wan~2.2 backbone has
$\sim\!5$B parameters at $720\!\times\!1280\!\times\!49$ resolution, so
each backward through $u_\theta$ would dominate the per-step compute of
any Jacobian-aware surrogate from family~(i). What $h$-control adds on
top of the existing family~(ii) is a \emph{spatially non-uniform}
confidence map: we replace the global scalar weight $\lambda_{\sigma_t}$
of Eq.~\eqref{eq:weighted-h} by the warp-derived mask $M$ over latent
sites, so the surrogate is applied at full strength on observed sites
and zero on $1\!-\!M$, recovering most of the Jacobian's
spatial-information content at zero autograd cost. The complementary
inner block-conditional Gibbs refinement on $1\!-\!M$
(Section~\ref{sec:method}) is what closes the gap on the unobserved
support that family~(ii) leaves open by construction.

\section{Block-conditional pseudo-Gibbs sampling}
\label{sec:appendix-complement-theory}

This appendix reads $h$-control's inner refinement loop as a
\emph{block-conditional pseudo-Gibbs Markov chain}: we recall the
generalized DAE / pseudo-Gibbs framework of
\citet{bengio2013generalized}, prove Proposition~\ref{prop:gibbs-stationary}
as its sub-state extension targeting the partial-observation conditional
law, and contrast the resulting chain with SRVS~\citep{jang2026self}.
All notation follows
Section~\ref{sec:prelim}: the pretrained latent flow-matching model induces
a velocity field $u_\theta(z_k,t_k,c)$ and a clean-prediction estimator
$\hat z_0(z_k,t_k,c) = z_k - \sigma_k\,u_\theta(z_k,t_k,c)$ via
Eq.~\eqref{eq:fm-cleanpred}.

\subsection{Generalized denoising auto-encoders as generative models}
\label{sec:appendix-bengio}

Let $X$ denote the clean random variable and $\tilde X$ its corruption under
a fixed conditional $\mathcal C(\tilde X\mid X)$.
\citet{bengio2013generalized} define the \emph{pseudo-Gibbs} Markov chain
associated with a learned reconstruction conditional
$P_\theta(X\mid \tilde X)$ as
\begin{equation}
\label{eq:bengio-chain}
X^{(j)}\;\sim\;P_\theta(X\mid\tilde X^{(j-1)}),
\qquad
\tilde X^{(j)}\;\sim\;\mathcal C(\tilde X\mid X^{(j)}),
\end{equation}
with transition operator
$T(X^{(j)}\mid X^{(j-1)})=\int P_\theta(X^{(j)}\mid \tilde X)\,\mathcal C(\tilde X\mid X^{(j-1)})\,d\tilde X$
on $X$. Their Theorem~1 states that if (i)~$P_\theta$ is a consistent
estimator of the true posterior $p(X\mid\tilde X)$ induced by the joint
$p(X)\,\mathcal C(\tilde X\mid X)$, and (ii)~$T$ is ergodic
(positivity of $P_\theta$ and $\mathcal C$ on a bounded volume is
sufficient), then the asymptotic distribution of Eq.~\eqref{eq:bengio-chain}
converges to the data-generating distribution $p(X)$.

Translated to a latent flow-matching generator at fixed sampling step $k$,
the identifications $\mathcal C\!\leftrightarrow\!R_{\sigma_k}$
(re-noising at level $\sigma_k$) and
$P_\theta\!\leftrightarrow\!D_{\sigma_k}$
(the implicit conditional induced by the denoiser/clean-prediction
estimator $\hat z_0(\cdot,t_k,c)$) give a perturb-and-redenoise iteration
on $z_k$ at fixed $\sigma_k$. Section~\ref{sec:appendix-complement-as-block}
extends this primitive to the partial-observation \emph{conditional}
setting that $h$-control's inner loop targets.

\subsection{Sub-state ergodicity and stationary distribution}
\label{sec:appendix-complement-as-block}

This subsection proves Proposition~\ref{prop:gibbs-stationary} from
Section~\ref{sec:gibbs-refinement} by reduction to a sub-state instance of
\citet{bengio2013generalized} Theorem~1. Throughout we hold $\sigma_t > 0$,
mask $M$, warped latent $\tilde z_0$, and pin
$\bar z_t = (1-\sigma_t)\tilde z_0 + \sigma_t\,\xi_{\rm obs}$ fixed; we
write $z|_M$ and $z|_{1-M}$ for the corresponding restrictions of any
latent tensor.

\begin{lemma}[Sub-state DAE chain]
\label{lem:substate-dae}
With $\bar z_t|_M$ held as fixed exogenous context and $\sigma_t$ held
fixed, the inner-loop chain
Eqs.~\eqref{eq:m-perturb}--\eqref{eq:m-redenoise} restricted to coordinates
in $1\!-\!M$ is a generalized DAE chain in the sense of
\citet{bengio2013generalized}, with substitutions
$X\!\leftrightarrow\!z_0|_{1-M}$ and
$\tilde X\!\leftrightarrow\!\tilde z|_{1-M}$.
\end{lemma}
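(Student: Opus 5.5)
The plan is to verify the two structural ingredients of a generalized DAE chain in the sense of \citet{bengio2013generalized}: a corruption kernel $\mathcal C(\tilde X\mid X)$ and a reconstruction conditional $P_\theta(X\mid\tilde X)$. The composition of the two must reproduce, on the coordinates $1\!-\!M$, the transition $\hat z_0^{(j-1)}|_{1-M}\mapsto\hat z_0^{(j)}|_{1-M}$ defined by Eqs.~\eqref{eq:m-perturb}--\eqref{eq:m-redenoise}. Throughout, $\sigma_t$, $M$ and the pin $\bar z_t|_M$ are fixed. The pin therefore enters only as a parameter of both kernels and never as a state variable.

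\emph{Corruption kernel.} Restricted to $1\!-\!M$, Eq.~\eqref{eq:m-perturb} reads $\tilde z|_{1-M}=(1-\sigma_t)\,\hat z_0^{(j-1)}|_{1-M}+\sigma_t\,\xi^{(j)}|_{1-M}$ with fresh Gaussian noise. I define
\[
\mathcal C_{\sigma_t}(\tilde X\mid X)=\mathcal N\bigl(\tilde X;(1-\sigma_t)X,\sigma_t^2 I\bigr), \qquad X=z_0|_{1-M}.
\]
This is exactly the OT-FM forward kernel at level $\sigma_t$ acting coordinatewise, with the noise on the masked coordinates independent of the state. Since the forward kernel factorizes over sites, restricting it to $1\!-\!M$ is legitimate. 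The full noised input to the network is then $z_t^{(j)}=(1-M)\odot\tilde z+M\odot\bar z_t$, a deterministic function of $(\tilde X,\bar z_t|_M)$.

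\emph{Reconstruction conditional.} Given $\tilde X$, the full input $z_t^{(j)}$ is determined. Eq.~\eqref{eq:m-redenoise} outputs $\hat z_0^{(j)}$, and I take its restriction to $1\!-\!M$ as the output of $P_\theta(X\mid\tilde X;\bar z_t|_M)$. Under (A1), $\hat z_0(\cdot,t,c)$ is consistent with the level-$\sigma_t$ joint, so this conditional is an estimator of $p(z_0|_{1-M}\mid z_t|_{1-M}=\tilde X,\ z_t|_M=\bar z_t|_M)$. That is precisely the posterior of the joint $p(z_0|_{1-M}\mid\bar z_t|_M)\,\mathcal C_{\sigma_t}(\tilde X\mid z_0|_{1-M})$. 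This works because, conditioned on $z_0$, the masked and unmasked noised coordinates are independent, so the pin acts as observed context. Composing the two kernels gives the transition $T(X^{(j)}\mid X^{(j-1)})=\int P_\theta(X^{(j)}\mid\tilde X)\,\mathcal C_{\sigma_t}(\tilde X\mid X^{(j-1)})\,d\tilde X$, which is Eq.~\eqref{eq:bengio-chain} under the substitutions stated in the lemma.

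\emph{Main obstacle.} The delicate point is that Eq.~\eqref{eq:m-redenoise} is deterministic: $\hat z_0$ is a posterior mean, not a sample. Read literally, $P_\theta$ is then a Dirac mass, and such a kernel is not the stochastic reconstruction conditional Bengio's construction requires. I would handle this as the word ``pseudo'' and assumption (A1) suggest. I interpret (A1) as asserting that the implicit conditional $D_{\sigma_t}$ induced by the clean-prediction estimator (Section~\ref{sec:appendix-bengio}) is the level-$\sigma_t$ posterior, with $\hat z_0$ as its readout. Under that reading, the lemma is a purely structural identification of the kernels. The quantitative consistency and ergodicity questions are deferred to Proposition~\ref{prop:gibbs-stationary} via (A1)--(A2).

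A secondary check is that nothing outside $1\!-\!M$ evolves. The masked coordinates of $z_t^{(j)}$ are reset to $\bar z_t|_M$ at every iteration, independently of $\hat z_0^{(j-1)}|_M$. The chain is therefore Markov on $z_0|_{1-M}$ alone, and this closes the reduction.
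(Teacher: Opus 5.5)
Your proposal follows the paper's proof essentially step for step: restricted OT-FM Gaussian corruption $\mathcal N\bigl((1-\sigma_t)z_0|_{1-M},\sigma_t^2 I\bigr)$, reconstruction as the implicit conditional of the denoiser with the observed coordinates pinned at $\bar z_t|_M$, and the observation that those coordinates are never re-noised, so the chain lives on $z|_{1-M}$ with $\bar z_t|_M$ as exogenous context; your posterior factorization (using that $z_t|_{1-M}$ depends on $z_0$ only through $z_0|_{1-M}$) is a correct detail the paper leaves implicit. One caveat on your ``main obstacle'': for this purely structural lemma the literal Dirac kernel $\delta\bigl(X-\hat z_0(z_t^{(j)},t,c)|_{1-M}\bigr)$ is already an admissible reconstruction kernel in Bengio's chain, whereas reading $\hat z_0$ as the ``readout'' of a stochastic posterior kernel describes a different chain from the one Eqs.~\eqref{eq:m-perturb}--\eqref{eq:m-redenoise} compute, and the deterministic posterior-mean output really undermines the consistency hypothesis behind Proposition~\ref{prop:gibbs-stationary} (e.g.\ with $M\equiv 0$, a scalar $\mathcal N(0,1)$ prior and an exact denoiser, the literal chain has stationary variance $(1-\sigma_t)^2/\bigl(\sigma_t^2+2(1-\sigma_t)^2\bigr)<1$), a point the paper's ``implicit conditional'' wording glosses over exactly as your reinterpretation does.
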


\begin{proof}[Proof of Lemma~\ref{lem:substate-dae}]
The corruption restricted to $1\!-\!M$ is
$\mathcal C(\tilde z|_{1-M}\mid z_0|_{1-M})=\mathcal N\!\bigl((1-\sigma_t)\,z_0|_{1-M},\,\sigma_t^2\,I\bigr)$,
a full-support Gaussian on $z_0|_{1-M}$. The conditional reconstruction
$P_\theta(z_0|_{1-M}\mid\tilde z|_{1-M},\,\tilde z|_M{=}\bar z_t|_M)$ is the
implicit conditional induced by the denoiser $u_\theta(\cdot,t,c)$ applied
to the perturbed iterate with the observed coordinates pinned at
$\bar z_t|_M$. Coordinates in $M$ are never re-noised during the inner loop;
they remain pinned to $\bar z_t|_M$, so the chain on $1\!-\!M$ is a
generalized DAE chain with $\bar z_t|_M$ as fixed exogenous context.
\end{proof}

\begin{proof}[Proof of Proposition~\ref{prop:gibbs-stationary}]
By Lemma~\ref{lem:substate-dae}, the chain on $1\!-\!M$ is a generalized
DAE chain with $\bar z_t|_M$ as fixed exogenous context.
\citet{bengio2013generalized} Theorem~1 states that if the reconstruction
conditional $P_\theta$ is consistent with the true posterior
$p(X\mid\tilde X)$ induced by the joint $p(X)\,\mathcal C(\tilde X\mid X)$,
and the transition operator is ergodic on a bounded volume, then the
chain's asymptotic distribution converges to the data-generating
distribution $p(X)$. In our setting, the joint at level $\sigma_t$ is
conditioned on $\bar z_t|_M$, so
$p(X)\!\leftrightarrow\!p(z_0|_{1-M}\mid \bar z_t|_M)$. Under assumption
(A1), $P_\theta\!\leftrightarrow\!u_\theta$ is consistent with this
conditional joint; under (A2), positivity of the perturbation kernel on
$1\!-\!M$ gives ergodicity. Therefore the clean-prediction iterates
converge in distribution to $p(z_0|_{1-M}\mid \bar z_t|_M)$.
\end{proof}

\paragraph{Connection to coordinate descent / block Gibbs sampling.}
The chain restricts the perturb-and-redenoise primitive to a fixed
coordinate block $1\!-\!M$ while holding $M$ frozen. This is the
generative-modeling analog of block-coordinate descent in optimization, in
which a fixed subset of variables is updated at each iteration with the rest
held constant; here ``descent'' on a loss is replaced by ``mixing'' of an
implicit prior. The use of a fixed block (rather than random or systematic
scans across blocks) is a deliberate task-driven choice: the partition
$(1\!-\!M, M)$ is given by the camera-control mask geometry and is the same
at every probe $j$.

\subsection{Comparison with SRVS}
\label{sec:appendix-srvs-diffs}

The pseudo-Gibbs chain primitive is shared with Self-Refining Video
Sampling (SRVS,~\citet{jang2026self}). At a fixed denoising step with
noise level $\sigma_k$, SRVS iterates
\begin{equation}
\label{eq:srvs-chain}
z_k^{(j)} \;=\; (1-\sigma_k)\,\hat z_0^{(j-1)} \;+\; \sigma_k\,\xi^{(j)},\qquad
\hat z_0^{(j)} \;=\; z_k^{(j)} \;-\; \sigma_k\,u_\theta(z_k^{(j)},t_k,c),
\qquad \xi^{(j)}\!\sim\!\mathcal N(0,I).
\end{equation}
With identifications $X\!\leftrightarrow\!\hat z_0$,
$\tilde X\!\leftrightarrow\!z_k$, $\mathcal C\!\leftrightarrow\!R_{\sigma_k}$
and $P_\theta\!\leftrightarrow\!D_{\sigma_k}$, Eq.~\eqref{eq:srvs-chain}
is a structural instance of Eq.~\eqref{eq:bengio-chain}: SRVS uses
Bengio's pseudo-Gibbs primitive at the chain level. The original SRVS
paper invokes this informally --- it does not state or apply Theorem~1
of \citet{bengio2013generalized} explicitly --- and motivates the
practice as ``mode-seeking'' toward the data manifold. Beyond the chain
primitive, SRVS adds an outer selective-refinement scheme applied
between outer steps as a per-iteration mixing rule on the next-level
noisy latent $z_{t_{i+1}}^{(k)}$, with the eligibility mask derived
from the lag-1 self-disagreement statistic
$U(z^{(k-1)},z^{(k)})\!=\!\frac{1}{C}\|D_\theta(z^{(k-1)})-D_\theta(z^{(k)})\|_1$
thresholded at a fixed $\tau$. Both the inner chain and the outer
mixing rule act on every coordinate of the latent; SRVS does not
condition the chain on any external observation.

\paragraph{Where $h$-control differs.}
$h$-control extends Bengio's primitive in five non-degenerate respects.

\emph{(i) Conditional vs.\ unconditional target (theorem correctness).}
The inner chain runs on the unobserved sub-state $z|_{1-M}$ with the
observed coordinates pinned at the freshly noised observation
$\bar z_t|_M=(1-\sigma_t)\tilde z_0|_M+\sigma_t\,\xi_{\rm obs}|_M$.
Proposition~\ref{prop:gibbs-stationary} extends Bengio's Theorem~1 to
the sub-state setting and proves that the chain's stationary
distribution is the partial-observation conditional data law
$p(z_0|_{1-M}\!\mid\!\bar z_t|_M)$. SRVS targets the unconditional
$p(z_0)$ informally without a formal theorem statement or proof;
$h$-control's claim is sharper, conditioned on the observed evidence,
and proven.

\emph{(ii) Block-conditional vs.\ full-state chain (the algorithmic novelty).}
$h$-control acts on the fixed coordinate block $1\!-\!M$ given by the
warp-derived mask, and within $1\!-\!M$ updates 3D patches of latent
voxels jointly. The patch partition is justified by the order-2-Markov
locality validation of Section~\ref{sec:appendix-locality-validation}
and is what makes the sub-state chain feasible at video scale: the
full-state chain on the $\sim\!10^5$ Wan~2.2 latent sites would mix
slowly per inner-iteration budget. SRVS does not
partition the state; the chain is full-state and the only spatial
structure is per-coordinate.

\emph{(iii) Mask source and mask placement.}
SRVS's eligibility mask is implicit (model self-disagreement) and
applied \emph{outside} the inner chain, as a mixing rule between
outer steps. $h$-control's mask is external (camera-warp visibility)
and defines the partition $(M,1-M)$ that the inner chain itself
respects: observed sites are pinned, unobserved sites are refined.
The role the mask plays in the algorithm is categorically different
in the two methods.

\emph{(iv) Convergence diagnostic.}
SRVS gates refinement by the lag-1 paired-probe self-disagreement
$U(z^{(k-1)},z^{(k)})<\tau$, which requires a noise-aware
$\tau(\sigma_k)$ to remain calibrated across the schedule.
$h$-control uses the per-patch $\Delta$-Welford running-variance
plateau detector Eq.~\eqref{eq:m-stable-mask} with a single
relative threshold $\kappa\!\in\!(0,1]$ that is scale-free at the
patch level and works unchanged across all outer steps; the per-patch
granularity matches the locality scale at which neighboring
coordinates' chain trajectories are correlated.

\emph{(v) Readout from the inner loop.}
SRVS commits the last iterate $\hat z_0^{(K_f)}$ as the inner-loop
output. $h$-control consumes the Polyak-averaged readout
$\bar z_0\!=\!J_{\max}^{-1}\sum_j\hat z_0^{(j)}$, which feeds the
FlowMatch Euler step a $\sqrt{\tau_{\rm int}/J_{\max}}$-times more
concentrated estimator of the stationary mean at one in-place
accumulator's cost (Section~\ref{sec:appendix-polyak}).

In short, the genuinely new theoretical contribution of
$h$-control is (i): the sub-state extension of Bengio's Theorem~1 with
a pinned exogenous context, giving the inner chain a proven
stationary distribution equal to the partial-observation conditional
posterior. The genuinely new algorithmic contribution is (ii): the
block-conditional Gibbs primitive on patches of the unobserved
support, made adaptive by the per-patch $\Delta$-Welford gate and
locality-justified at video scale. Neither contribution exists in
SRVS or in any prior generalized-DAE-based sampler we are aware of.

\section{Block-precision locality validation}
\label{sec:appendix-locality-validation}

\paragraph{Setup.}
Stack the $N$ VAE-encoded videos into a tensor of shape $(N,C,L,H,W)$,
with $C$-channel voxel
$z^{(n)}_{l,h,w}\!\in\!\mathbb{R}^{C}$ at grid position $(l,h,w)$ of
video $n$. Following
Section~\ref{sec:method-locality-gibbs}, we work one axis at a time.
For axis $\alpha\!\in\!\{L,H,W\}$, an \emph{axis-aligned line stack}
is built by permuting $\alpha$ to slot~$1$ and the channel dim to
slot~$2$, then merging the two off-axis dimensions and the video
index into one row index:
\[
(N,C,L,H,W)\;\;\xrightarrow{\;\text{permute + flatten off-axis}\;}\;\;
\bigl(N_s,\,|\alpha|,\,C\bigr),
\qquad N_s\,=\,N\!\cdot\!\prod_{\alpha'\neq\alpha}|\alpha'|.
\]
Each row of this $\bigl(N_s,|\alpha|,C\bigr)$ tensor is one
\emph{line}
$\mathbf X\!=\!(X_1,\dots,X_{|\alpha|})\!\in\!\mathbb{R}^{|\alpha|\times C}$:
the $|\alpha|$ voxels swept along $\alpha$ at one (video, off-axis
location) pair, with $X_\beta\!\in\!\mathbb{R}^{C}$ the voxel at
axis-position $\beta$. Treating the $N_s$ rows as i.i.d.\ draws of
the joint $\mathbf X$ --- under a working assumption of stationarity
along the off-axis dimensions --- and flattening the last two
dimensions gives the design matrix
$\mathbf F\!\in\!\mathbb{R}^{N_s\times|\alpha|C}$ that the precision
pipeline below consumes. For the benchmarked
Wan~2.2 latent at $49$ frames, $704{\times}1280$, $C\!=\!48$, this
yields $N_s\!\in\![2.9\!\times\!10^4,\,1.8\!\times\!10^5]$ depending
on axis, comfortably $N_s\gg |\alpha|C$.

\paragraph{Block precision and conditional independence.}
Form the centered sample covariance
\[
\widehat\Sigma \;=\; \tfrac{1}{N_s-1}\,\mathbf F^\top\mathbf F\;\in\;\mathbb R^{|\alpha|C\times |\alpha|C},
\]
where $\mathbf F\in\mathbb R^{N_s\times |\alpha|C}$ stacks the $N_s$
flattened lines after per-coordinate standardization. Add a tiny relative
ridge
$\widehat\Sigma_\lambda=\widehat\Sigma+\lambda\,\mathrm{tr}(\widehat\Sigma)/(|\alpha|C)\,I$
($\lambda=10^{-6}$) for positive-definiteness and form the precision
$\widehat\Omega = \widehat\Sigma_\lambda^{-1}$ via numerically stable
Cholesky factorization. Reshape $\widehat\Omega$ into a
$|\alpha|\!\times\!|\alpha|$ array of $(C\!\times\!C)$ blocks
$\{\widehat\Omega_{\beta\gamma}\}$ indexed by axis positions
$\beta,\gamma\!\in\!\{1,\dots,|\alpha|\}$. The classical Gauss--Markov
result \citep{rogers2000diffusions} extended block-wise reads
\begin{equation}
\label{eq:appendix-locality-omega}
    X_\beta\;\perp\!\!\!\perp\;X_\gamma\,\bigm|\,X_{-\beta\gamma}
    \;\Longleftrightarrow\;
    \widehat\Omega_{\beta\gamma}=0_{C\times C},
\end{equation}
where $X_{-\beta\gamma}=\{X_r:r\neq \beta,\gamma\}$ is the rest of the line.

\paragraph{Block partial correlation via canonical correlations.}
The natural unit-free statistic is the block partial correlation
\[
\widehat R_{\beta\gamma} \;=\; -\,\widehat\Omega_{\beta\beta}^{-1/2}\,\widehat\Omega_{\beta\gamma}\,\widehat\Omega_{\gamma\gamma}^{-1/2}\;\in\;\mathbb R^{C\times C},
\]
whose singular values $\rho_1\ge\dots\ge\rho_C\ge0$ are the canonical
partial correlations between $X_\beta$ and $X_\gamma$ after regressing out
$X_{-\beta\gamma}$. The top canonical correlation
$\rho_1(\widehat R_{\beta\gamma})\in[0,1]$ measures the strongest linear
conditional dependence; vanishing of all $\rho_i$ recovers the
conditional-independence test of
Eq.~\eqref{eq:appendix-locality-omega}. On the diagonal $\beta\!=\!\gamma$,
$\widehat R_{\beta\beta}=-I$ and all singular values equal $1$ (the source
of the $|\alpha|\!\times\!|\alpha|$ heatmap diagonal in
Figure~\ref{fig:heatmap}).

\paragraph{Decay metric.}
For each axis we report the cumulative tail mass of the squared top
canonical partial correlation,
\begin{equation}
\label{eq:appendix-eta}
    \eta(r)
    \;=\;
    \frac{\sum_{|\beta-\gamma|>r}\bigl(\rho_1(\widehat R_{\beta\gamma})\bigr)^2}
         {\sum_{\beta\neq \gamma}\bigl(\rho_1(\widehat R_{\beta\gamma})\bigr)^2}\;\in\;[0,1].
\end{equation}
$\eta(r)$ is the fraction of squared off-diagonal partial-correlation mass
at distance greater than $r$; sharp locality $\Leftrightarrow$
$\eta(r)\to 0$ for some modest $r$.

\paragraph{Noise floor.}
A finite-sample top canonical correlation between two unrelated
$C$-vectors averages to
$\mathbb E[\rho_1]\!\approx\!2\sqrt{C/N_s}$ rather than zero. For our axes
this gives floors of $0.033$ ($L$), $0.061$ ($H$), $0.082$ ($W$); observed
far-distance values are $0.07$--$0.10$, within a factor of $\sim\!2$ of
the theoretical prediction. Most residual mass at large distances is
sampling noise rather than true long-range coupling.

\paragraph{Reading Figure~\ref{fig:heatmap}.}
Each axis gives a $|\alpha|\!\times\!|\alpha|$ heatmap of
$\rho_1(\widehat R_{\beta\gamma})$. A clean local axis satisfies
(i)~$\rho_1\ge 0.5$ at $|\beta-\gamma|=1$, (ii)~drop to $\le\!1/3$ of that
value by $|\beta-\gamma|=2$, (iii)~floor near $2\sqrt{C/N_s}$ for
$|\beta-\gamma|\ge 3$. Figure~\ref{fig:heatmap} satisfies all three on
each of $L,H,W$ for the Wan~2.2 latent, justifying the choice of patch
radius $\ge 2$ voxels in Section~\ref{sec:method-locality-gibbs}.

\begin{figure}[p]
  \centering
  \subcaptionbox{$\sigma_t=0.1$\label{fig:heatmap-s01}}{%
    \includegraphics[width=0.78\linewidth]{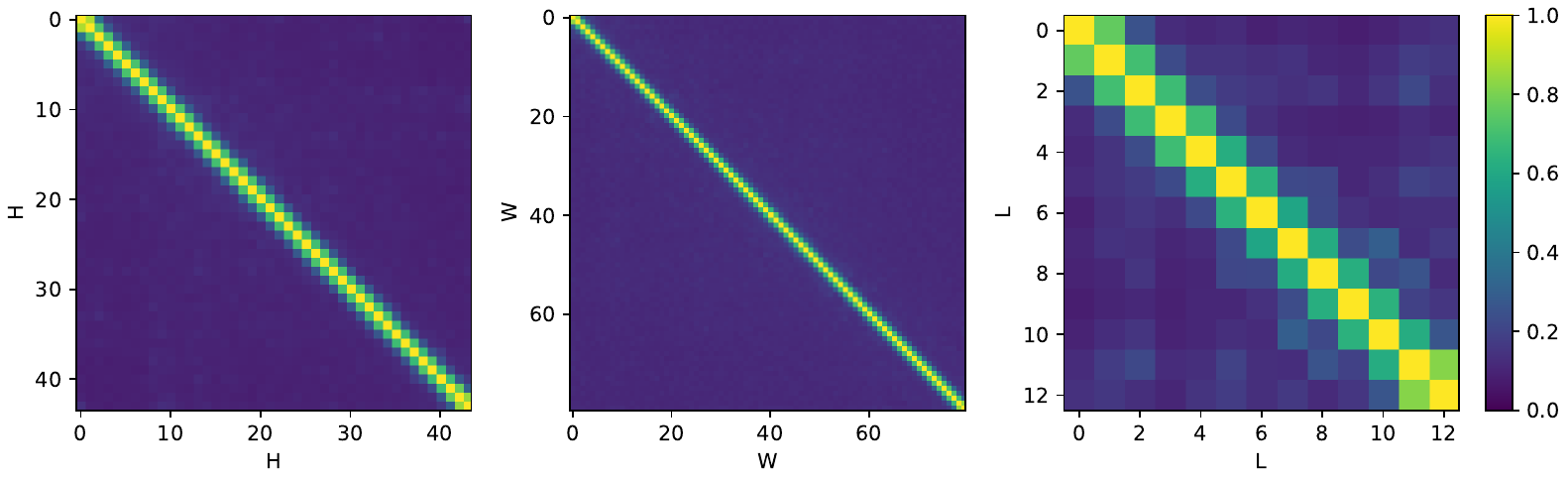}}\\[0.5ex]
  \subcaptionbox{$\sigma_t=0.3$\label{fig:heatmap-s03}}{%
    \includegraphics[width=0.78\linewidth]{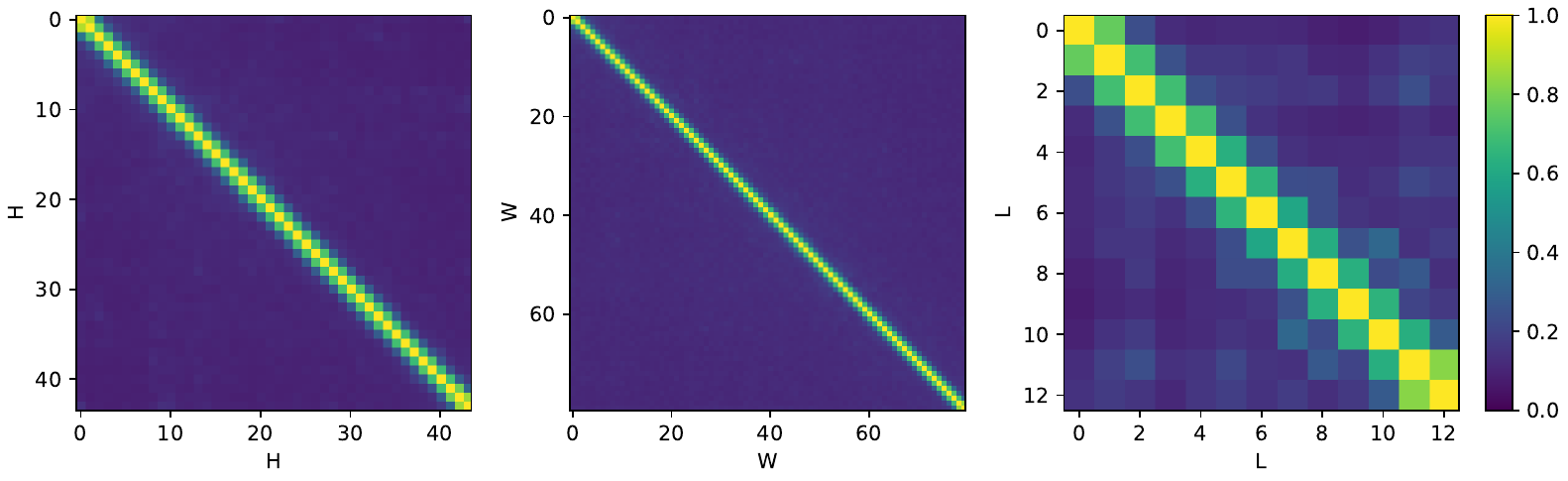}}\\[0.5ex]
  \subcaptionbox{$\sigma_t=0.5$\label{fig:heatmap-s05}}{%
    \includegraphics[width=0.78\linewidth]{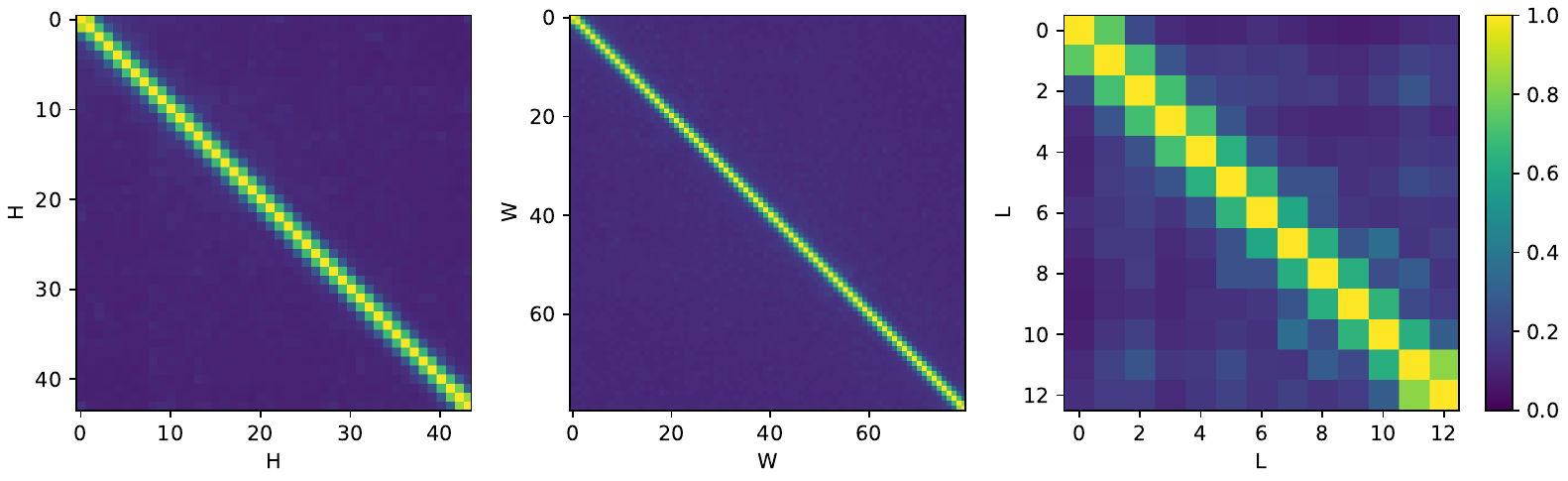}}\\[0.5ex]
  \subcaptionbox{$\sigma_t=0.7$\label{fig:heatmap-s07}}{%
    \includegraphics[width=0.78\linewidth]{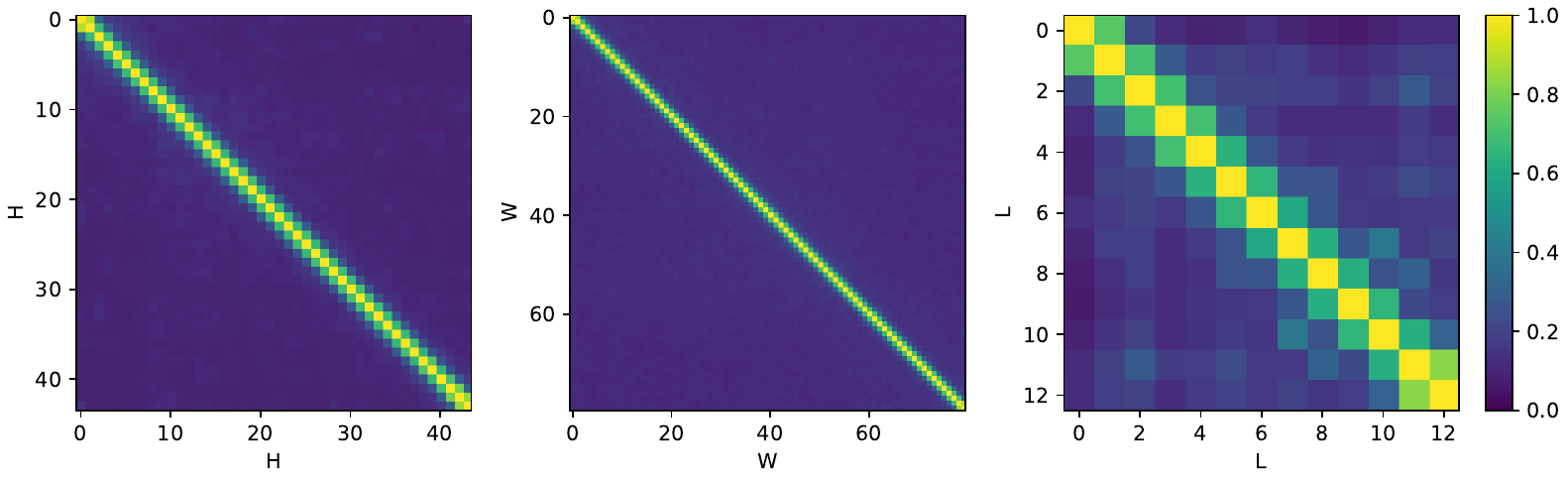}}\\[0.5ex]
  \subcaptionbox{$\sigma_t=0.9$\label{fig:heatmap-s09}}{%
    \includegraphics[width=0.78\linewidth]{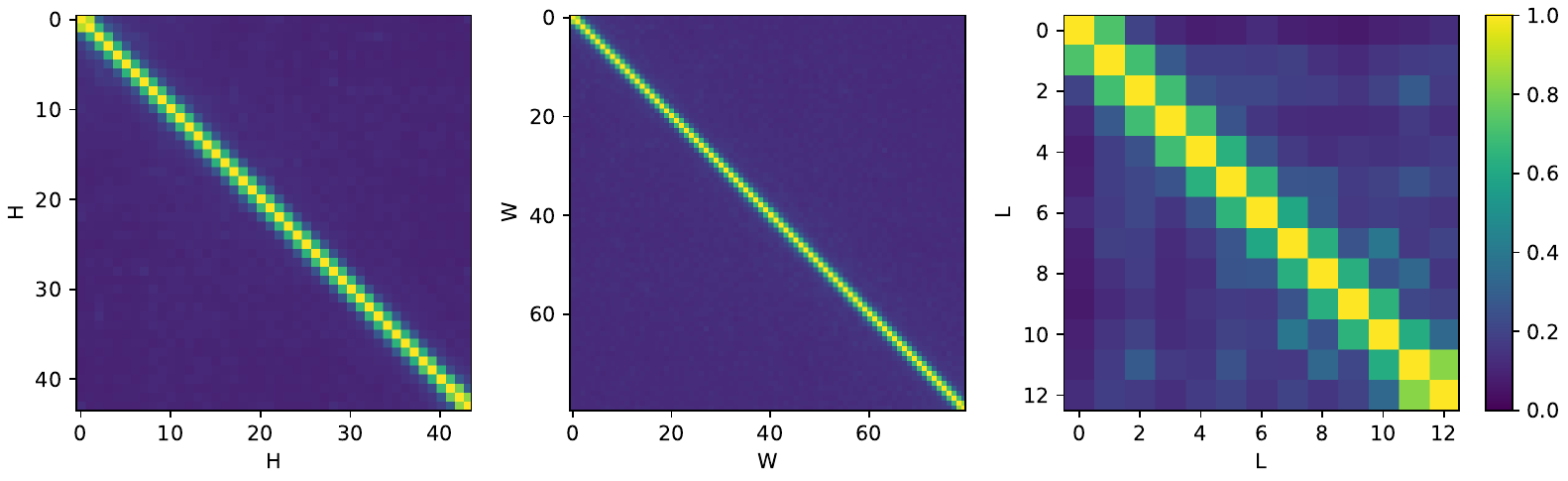}}
  \caption{Top canonical partial correlation
  $\rho_1(\widehat R_{\beta\gamma})$ on the model's clean prediction
  $\hat z_0(z_t,t,c)$ along the $H$, $W$, $L$ axes (left to right) at
  five noise levels. The diagonal band stays sharp at every $\sigma_t$
  with off-diagonal mass concentrated within $|\beta-\gamma|\!\le\!2$,
  confirming that the locality structure of Figure~\ref{fig:heatmap}
  is preserved across the sampling schedule.}
  \label{fig:heatmap-sigma}
\end{figure}

\paragraph{Locality across the noise schedule.}
The diagnostic above applies to clean VAE-encoded latents. We extend it
to the model's clean prediction $\hat z_0(z_t,t,c)$ at non-zero noise
levels, since the inner-loop perturb-and-redenoise of
Eqs.~\eqref{eq:m-perturb}--\eqref{eq:m-redenoise} operates in
clean-prediction space at the current noise level $\sigma_t$. We
generate $200$ videos by running the official Wan~2.2 flow-matching
sampler from random initial noise $z_1\!\sim\!\mathcal N(0,I)$
down to $z_0$, and record the full sampling trajectory $\{z_t\}$.
At every recorded denoising step we evaluate
$\hat z_0(z_t,t,c)=z_t-\sigma_t u_\theta(z_t,t,c)$ and run the
block-precision pipeline of Eq.~\eqref{eq:appendix-locality-omega} on
the resulting $\hat z_0$ tensors at
$\sigma_t\in\{0.1,0.3,0.5,0.7,0.9\}$.
Figure~\ref{fig:heatmap-sigma} shows the resulting per-axis heatmaps
at all five noise levels: the diagonal band stays sharp at every
$\sigma_t$, with off-diagonal mass dropping into the finite-sample
noise floor by distance $|\beta-\gamma|\!=\!2$--$3$ on each of
$L,H,W$, identical to the clean-latent diagnostic of
Figure~\ref{fig:heatmap}. The locality structure is therefore
preserved across the entire sampling trajectory, not just on clean
encoded latents, and the block-conditional Gibbs partition of
Section~\ref{sec:method-locality-gibbs} is valid at every active
$h$-control outer step.

\section{Polyak-averaged readout: compatibility with the FlowMatch Euler step}
\label{sec:appendix-polyak}

The write-back Eq.~\eqref{eq:m-writeback} consumes the running mean
$\bar z_0 = J_{\max}^{-1}\sum_{j=1}^{J_{\max}} \hat z_0^{(j)}$ of the
inner clean-prediction iterates, not the last iterate. The variance
reduction argument is the standard MCMC iterate-averaging
result~\citep{polyak1992acceleration}: at stationarity the running mean has
variance $\sigma_\pi^2\,\tau_{\rm int}/J + \mathcal O(J^{-2})$ vs.\ the
last iterate's $\sigma_\pi^2$, where $\tau_{\rm int}$ is the integrated
autocorrelation time. We add only the observation that this estimator is
the right input to feed into our outer integrator. The FlowMatch Euler
update
$z_{k+1}=z_k+(\sigma_{k+1}-\sigma_k)\,(z_k-\hat z_0^{\rm final})/\sigma_k$
is \emph{linear} in $\hat z_0^{\rm final}$, so any extra variance in the
readout is injected into the latent $z_{k+1}$ without an averaging or
projection step to absorb it; conversely, the bias/variance decomposition
of the Polyak-averaged readout passes through the linear update unchanged.
Consuming the running mean therefore hands the integrator a
$\sqrt{\tau_{\rm int}/J_{\max}}$-times more concentrated estimator of the
stationary mean than the last iterate would, at the cost of one in-place
accumulator per outer step --- a strict improvement under the linearity
that FlowMatch already provides.

\section{Toy-example details}
\label{sec:appendix-toy}

We have provided the code of the toy example for reproducing all the figures in the paper.

\textbf{Setup.}
The unconditional density is a 2D 8-square checkerboard: eight unit
squares centered at
$\{(i+0.5, j+0.5):i,j\in\{-2,-1,0,1\}, (i+j)\,\mathrm{mod}\,2=0\}$,
sampled uniform-area within each square. We train a 6-layer
flow-matching MLP with sinusoidal time embedding and $\mathrm{SiLU}$
activations on this density for $20{,}000$ iterations (batch $512$,
AdamW with $\mathrm{lr}=2{\times}10^{-3}$ and weight decay $10^{-4}$,
cosine schedule); the trained MLP attains a manifold-hit rate above
$99\%$ on unconditional samples, certifying that any conditional
shortfall observed below is due to the conditional sampler rather than
a poorly trained prior. We then evaluate conditional sampling under a
noisy partial observation of the first coordinate
$y\!\sim\!\mathcal N(z_0[1],\,\sigma_y^2)$ with $\sigma_y\!=\!0.2$,
anchored at $y_{\rm obs}\!\approx\!0.5$ where the vertical constraint
line intersects exactly two of the eight filled squares, yielding a
binary mode-coverage question with two well-separated conditional
modes.

\textbf{Evaluation.}
We compare $h$-control against DPS~\citep{chung2022diffusion}
(compute-native at $\mathrm{NFE}\!=\!100$, $1$ forward $+$ $1$ backward
per step at $50$ outer steps) and TFG-UGD~\citep{ye2024tfg} at
$\mathrm{NFE}\!=\!250$ (configuration $N_{\rm recur}\!=\!2$,
$N_{\rm iter}\!=\!1$, $\mu\!=\!0.5$, $\rho\!=\!0.5$, giving $5$ calls per
step at $50$ outer steps). $h$-control with inner budget $J\!=\!4$ also
costs $5$ calls per step ($1$ outer forward $+$ $4$ inner forwards), so
it is compute-matched to TFG-UGD at $\mathrm{NFE}\!=\!250$. For the
matched sweep of Figure~\ref{fig:toy-b}, we vary $J\!\in\!\{0,\dots,16\}$
for $h$-control and $N_{\rm recur}\!\in\!\{1,\dots,6\}$ for TFG-UGD,
plotting total NFE on the horizontal axis.

Each method is evaluated with $10$ seeds and $500$ samples per seed
($5{,}000$ samples total). The posterior-hit rate is the fraction of
samples falling \emph{both} inside one of the two filled squares
intersected by the constraint line \emph{and} within $|x_1-y_{\rm obs}|<0.5$
of the anchor. A sample on the constraint line but outside the filled
squares contributes zero to posterior-hit but may still contribute to
manifold-hit (a sanity check that the sampler did not drift entirely off
the data manifold).

\textbf{Results.}
Figure~\ref{fig:toy-c} bins the inner iterations $j$ by noise band and
plots $|\Delta_W^{(j)}|$ (averaged over the population of toy chains) on
a shared horizontal axis with the posterior-hit rate of
Figure~\ref{fig:toy-b}; the two curves plateau over the same $j$-range,
the empirical content of the claim that $|\Delta_W^{(j)}|\!\to\!0$ is a
reliable ground-truth-free trigger for the early-freeze gate. Disabling
the inner loop ($J\!=\!0$) reduces $h$-control to DPS-like behaviour:
with the inner Gibbs chain in
Eqs.~\eqref{eq:m-perturb}--\eqref{eq:m-redenoise} disabled, only the
outer soft pull of Eq.~\eqref{eq:m-vctrl} remains, which at the
$\sigma_y\!=\!0.2$ used here is the DPS gradient evaluated at the
flow-matching Tweedie mean (Appendix~\ref{sec:appendix-surrogates}). The
$h$-control sample cloud then collapses onto the same constraint-line
support that DPS produces in Figure~\ref{fig:toy-a}, with mode coverage
governed entirely by the outer pull --- the clean ablation that isolates
the inner loop as the source of the mode-coverage gain.

\section{$h$-control implementation details}
\label{sec:appendix-impl}

\subsection{Hyperparameter Settings}
\label{sec:hyperparams}

Table~\ref{tab:hyperparam} lists the necessary hyperparameters used
in our main experiments with $h$-control on the Wan~2.2 backbone.
The discretized FlowMatch sampler runs for $50$ outer steps,
indexed so that step $0$ corresponds to pure noise and step $50$
to the clean sample. The camera-control window covers outer steps
$t_s\!=\!2$ through $t_e\!=\!5$, the highest-noise interval where
the warp constraint is most informative; at each step within this
window the inner-loop refinement runs with budget
$J_{\max}\!=\!10$, with the $\Delta$-Welford freeze gate
(Section~\ref{sec:method-locality-gibbs}) allowing per-patch early
termination so that the realised inner iteration count averages
below $J_{\max}$. We use classifier-free guidance with scale~$4.0$.

\begin{table}[htbp]
\centering
\small
\setlength{\tabcolsep}{6pt}
\renewcommand{\arraystretch}{1.15}
\caption{Hyperparameter settings for the main experiments
($h$-control on the Wan~2.2 backbone).}
\label{tab:hyperparam}
\begin{tabular}{lcl}
\toprule
\textbf{Hyperparameter} & \textbf{Value} & \textbf{Description} \\
\midrule
Sampling steps & $50$  & Total discretized FlowMatch outer steps \\
$t_s$          & $2$   & First outer step of the camera-control window \\
$t_e$          & $5$   & Last outer step of the camera-control window \\
$J_{\max}$     & $10$  & Inner-loop budget per outer step in the camera-control window \\
CFG scale      & $4.0$ & Classifier-free guidance scale \\
\bottomrule
\end{tabular}
\end{table}

\section{Datasets and evaluation protocol}
\label{sec:appendix-eval}

\textbf{RealEstate10K.} We follow the RealEstate10K~\citep{zhou2018stereo} train/test split.
\textbf{Sampling.} We sample $200$ scenes uniformly at random, with a
fixed seed for reproducibility. For each
scene we treat the first frame as the I2V conditioning input and the
remaining $48$ frames as the held-out reference under the source camera
trajectory. Reference-based metrics (LPIPS, SSIM) are computed
between the generated video and the held-out reference frames after
matching frame counts. The target trajectory
is the one parsed from the dataset's pose annotations; it is fed to every
method that takes camera parameters explicitly, and used to render the
warp for every training-free method that takes a warp.

\textbf{DAVIS controlled-trajectory synthesis} We take the DAVIS-2017 challenge set, restricted to the $28$
sequences with continuous moderate-to-large object motion. For each scene we synthesize three target trajectories of
varying angular extent --- approximately $\pm 15^\circ$,
$\pm 30^\circ$, and $\pm 45^\circ$ horizontal sweep around the source
camera --- giving $84$ controlled-trajectory videos in total. Each
trajectory is realized as a smooth $49$-frame sequence of $6$-DOF camera
poses; they are converted into a warp by the same depth-based
lift-and-reproject pipeline used by every training-free baseline
(see Appendix~\ref{sec:appendix-impl}).
DAVIS lacks ground-truth novel views under the synthesized trajectories,
so we do not report PSNR/LPIPS/SSIM and instead use FVD against the source
distribution and the camera-pose-derived trajectory metrics on
re-extracted poses (Section~\ref{sec:appendix-eval-traj}).

\subsection{Camera trajectory metrics}
\label{sec:appendix-eval-traj}

For every generated video we re-extract the $49$-frame camera trajectory
with Mega-SAM~\citep{li2025megasam}, a recent monocular SLAM tracker,
without any tuning. The re-extracted trajectory is aligned to the
target trajectory by a single similarity transform (translation,
rotation, scale) and then evaluated by:
\begin{itemize}
\item \textbf{ATE} (Absolute Trajectory Error): per-frame root-mean-square
translational error after alignment, in centimeters under the dataset's
metric scale.
\item \textbf{RRE} (Relative Rotation Error): per-pair root-mean-square
rotational error between consecutive frames, in degrees.
\item \textbf{RTE} (Relative Translation Error): per-pair root-mean-square
translational error between consecutive frames, in centimeters.
\end{itemize}
Failures of Mega-SAM (re-extraction divergence on extreme dynamic content)
are extremely rare on RealEstate10K and occur on $<\!3\%$ of DAVIS
trajectories; affected videos are dropped uniformly across all methods to
keep the comparison fair, with the drop count reported per row.

\subsection{Perceptual metrics}
\label{sec:appendix-eval-perc}

\textbf{FVD.} Standard Fr\'echet Video
Distance~\citep{unterthiner2018towards} computed against a held-out
reference distribution per dataset: source DAVIS clips on DAVIS, source
RealEstate10K clips on RealEstate10K. Features come from the I3D
backbone trained on Kinetics-400; we use the public implementation with
default centering.

\textbf{CLIP-v / CLIP-f.} CLIP-based video coherence (CLIP-v,
mean cosine similarity of frame-CLIP embeddings within the same video)
and frame-text alignment (CLIP-f, mean cosine similarity between each
frame's CLIP image embedding and the scene caption's CLIP text
embedding). These are the two text-aligned consistency metrics commonly
used.

\textbf{LPIPS.} Learned Perceptual Image Patch
Similarity~\citep{zhang2018lpips}: a reference-based perceptual metric
that compares deep features (we use the standard AlexNet backbone)
between two RGB frames, calibrated to human perceptual judgments. We
report LPIPS only on RealEstate10K, where each generated frame can be
compared to a held-out ground-truth view at the corresponding pose;
DAVIS provides no such reference for novel-trajectory views. Lower is
better.

\textbf{SSIM.} Structural Similarity Index~\citep{wang2004ssim}: a
classical reference-based metric that compares the luminance, contrast,
and structural components of two RGB frames in a sliding window. As
with LPIPS, we report SSIM only on RealEstate10K against the held-out
ground-truth view, computed per frame and averaged over the video.
Higher is better.

\subsection{Inference speed}
\label{sec:appendix-runtime}

Table~\ref{tab:time} reports inference speed in frames/min on a single
NVIDIA~A40 GPU at each method's native operating resolution. $h$-control
runs at $7.13$ frames/min at the full $720{\times}1280$ resolution
shared by every training-free baseline---the second-fastest method
overall, behind only TTM ($12.99$), which is bare hard-replacement
without any inner-loop refinement and therefore represents the natural
runtime ceiling. The inner-loop pseudo-Gibbs refinement adds modest
overhead relative to TTM, yet $h$-control remains faster than the
remaining training-free baselines (WorldForge $3.92$, RePaint $6.23$,
Coarse-Guided $6.97$) and faster than every training-based controller,
even though the training-based methods operate at strictly lower
resolution (ReCamMaster $5.55$ at $480{\times}832$, TrajectoryCrafter
$4.42$ at $384{\times}672$, TrajectoryAttention $1.36$ at
$576{\times}1024$). Combined with the FVD gains of
Tables~\ref{tab:re10k-quant}--\ref{tab:davis-quant}, this places
$h$-control on the favourable side of the quality--speed Pareto
frontier.

\begin{table}[h]
\centering
\small
\setlength{\tabcolsep}{8pt}
\renewcommand{\arraystretch}{1.15}
\caption{Inference speed of training-free and training-based
camera-control baselines and ours, reported at each method's native
operating resolution on a single NVIDIA~A40 GPU. $^\dagger$ marks
training-based methods.}
\label{tab:time}
\begin{tabular}{lccc}
\toprule
 & \textbf{Resolution} & \textbf{\shortstack{Inference speed\\(frames/min)}} & \textbf{\shortstack{Base video\\model}}\\
\midrule
WorldForge                    & $720{\times}1280$ & $3.92$  & Wan~2.2 \\
TTM                           & $720{\times}1280$ & $\mathbf{12.99}$ & Wan~2.2 \\
Coarse-Guided                 & $720{\times}1280$ & $6.97$  & Wan~2.2 \\
RePaint                       & $720{\times}1280$ & $6.23$  & Wan~2.2 \\
\midrule
TrajectoryAttention$^\dagger$ & $576{\times}1024$ & $1.36$  & Wan~2.2 \\
TrajectoryCrafter$^\dagger$   & $384{\times}672$  & $4.42$  & CogVideoX \\
ReCamMaster$^\dagger$         & $480{\times}832$  & $5.55$  & Wan~2.1 \\
\midrule
\textbf{$h$-control (Ours)}   & $720{\times}1280$ & $\underline{7.13}$ & Wan~2.2 \\
\bottomrule
\end{tabular}
\end{table}

\section{Additional qualitative results}
\label{sec:appendix-quali}

\subsection{Failure-mode catalogue}
\label{sec:appendix-quali-failures}

Figure~\ref{fig:appendix-failure-depth} compares the warped guidance
video (with depth-induced wrong warping), the ground-truth target
trajectory, and our $h$-control output. Although the inner-loop
refinement still produces visually plausible inpainting on the
unobserved complement (the generated frames are seam-free and
natural-looking), the output tracks the wrongly warped guidance rather
than the ground-truth trajectory. This is the expected behaviour under
our partial-observation framing (Section~\ref{sec:method}): the warped
video supplies the posterior evidence, and once that evidence is
geometrically inconsistent the pretrained prior cannot correct it,
since the prior encodes what plausible video looks like rather than
what the correct camera trajectory should be. Robustness to severe
depth misestimation therefore requires either a more reliable depth
estimator upstream or a separate trajectory prior beyond the visual
prior $p(z_0)$.
\begin{figure}[!ht]
  \centering
  \includegraphics[width=0.9\linewidth]{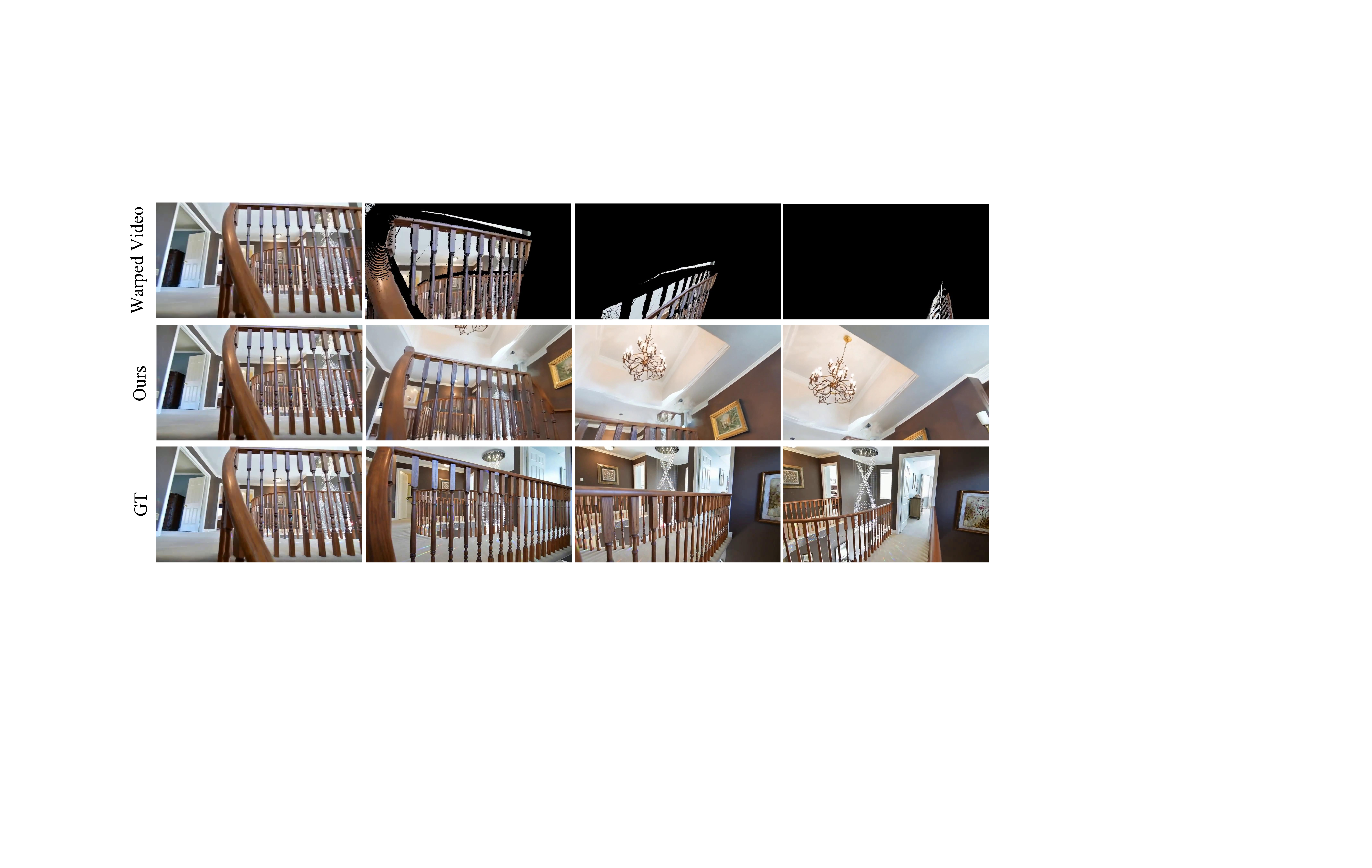}
  \caption{\textbf{Failure modes.} Severe depth estimation error
  produces wrong warping, leads to bad camera control. }
  \label{fig:appendix-failure-depth}
\end{figure}

\subsection{RealEstate10K}
\label{sec:appendix-quali-re10k}

Figure~\ref{fig:appendix-re10k-gallery} extends the main-paper
qualitative panel (Figure~\ref{fig:real10k_exp}) with additional
held-out RealEstate10K scenes spanning a broader range of trajectory
extents. 
\begin{figure}[!ht]
  \centering{%
    \includegraphics[width=1\linewidth]{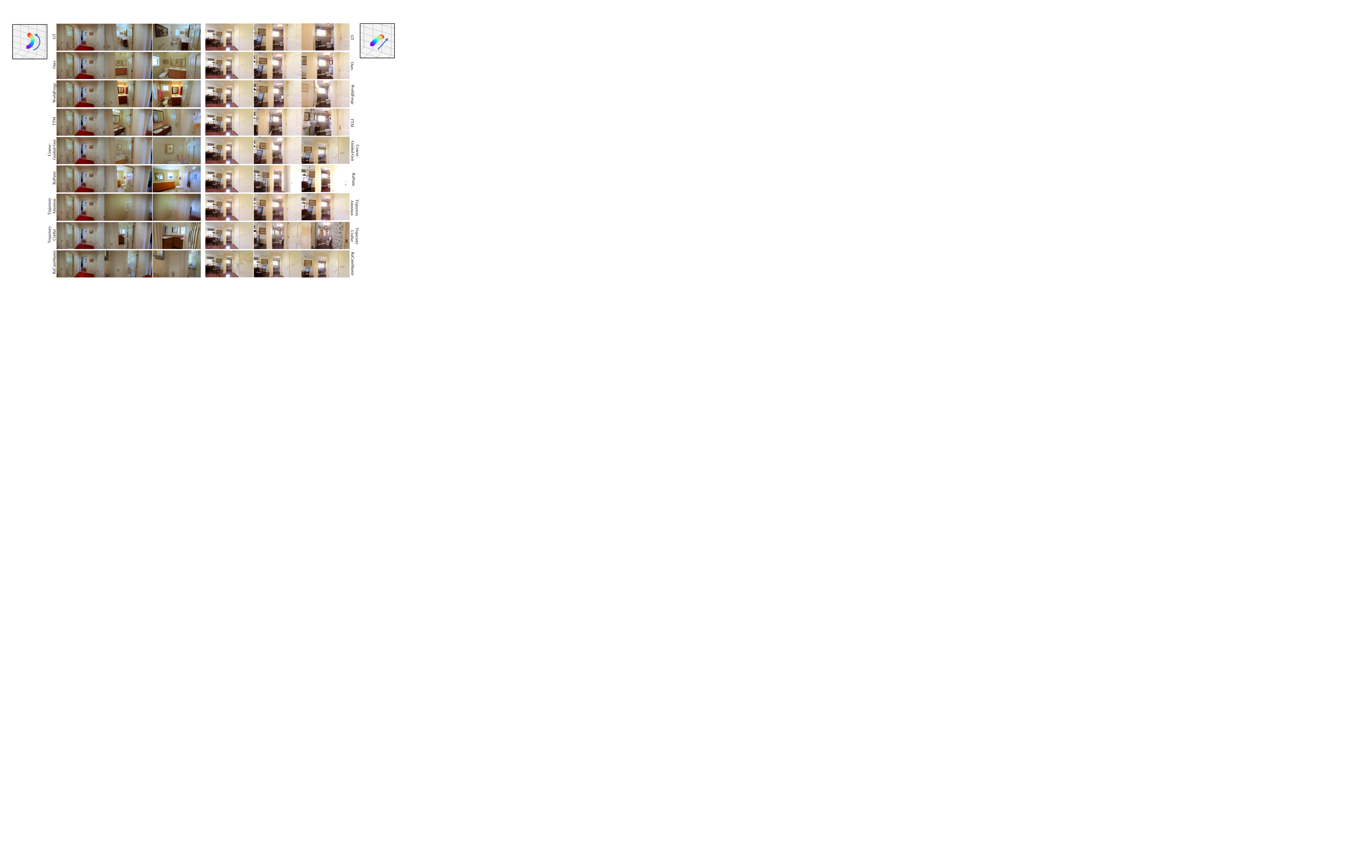}}
  \caption{Additional RealEstate10K qualitative comparisons.}
  \label{fig:appendix-re10k-gallery}
\end{figure}

\subsection{DAVIS}
\label{sec:appendix-quali-davis}

Figure~\ref{fig:appendix-davis-2} extends the main-paper DAVIS
qualitative panel (Figure~\ref{fig:davis-qualitative}) with additional
dynamic scenes under larger trajectory sweeps. 

\begin{figure}[!ht]
  \centering{%
    \includegraphics[width=1\linewidth]{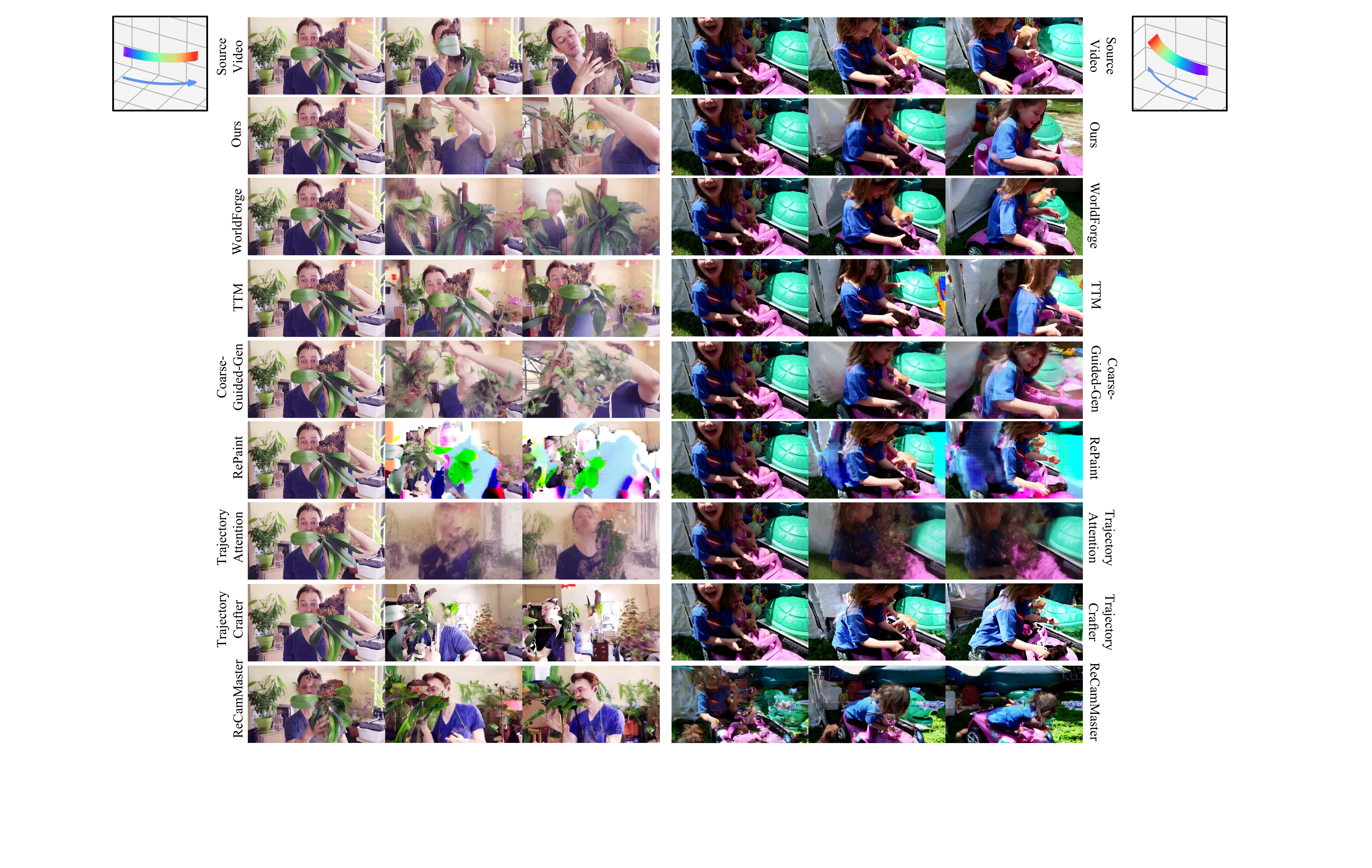}}
    \\
    [0.5ex]
{%
    \includegraphics[width=1\linewidth]{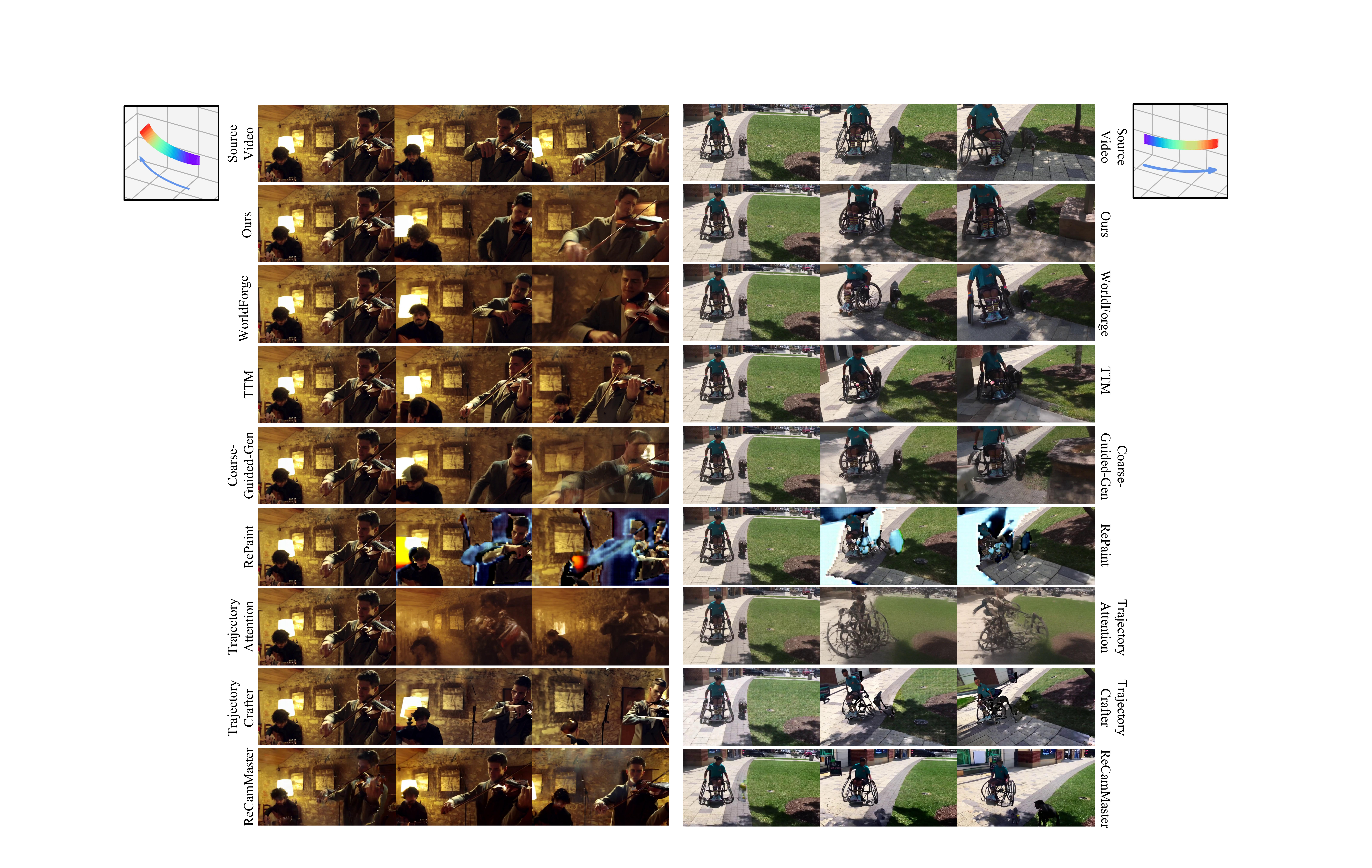}}
    \caption{ More comparison with state-of-the-art methods on Davis.}
  \label{fig:appendix-davis-2}
\end{figure}
\section{Broader impacts}
\label{sec:appendix-broader-impacts}

\paragraph{Positive impacts.}
Improved camera control for pretrained video generators has constructive
applications in filmmaking previsualization, simulation and synthetic
data for embodied AI, virtual- and augmented-reality content authoring,
and education. Because $h$-control is training-free and adds no new
generative capability on top of the base model, it lowers the engineering
barrier to high-quality camera-controlled video without requiring new
large-scale training, which improves accessibility for users without
the compute to fine-tune video generators.

\paragraph{Negative impacts.}
Higher-fidelity camera control on pretrained video generators can also
make synthetic video more convincing, which inherits the well-known
misuse profile of modern video generative models---non-consensual or
deceptive synthetic media, impersonation, and disinformation. $h$-control
itself does not introduce new training data or new generative capacity;
it operates as a sampler on top of an already-released backbone, so the
risks it raises are the risks of the underlying generator combined with
sharper trajectory control.

\paragraph{Mitigations.}
We rely on the safeguards of the underlying pretrained video generator
(content filters, license restrictions, watermarking where applicable),
release no new pretrained model or scraped dataset, and use only public
benchmarks (RealEstate10K, DAVIS) under their original terms.

\clearpage



\end{document}